\newtheorem{theorem}{Theorem}
\newtheorem{definition}{Definition}
\newtheorem{lemma}{Lemma}
\newtheorem{proposition}{Proposition}
\DeclareMathOperator*{\E}{\mathbb{E}}
\DeclareMathOperator*{\argmax}{arg\,max}
\DeclareMathOperator*{\argmin}{arg\,min}
\newcommand\inner[2]{\langle #1, #2 \rangle}
\newcommand{\RomanNumeralCaps}[1]
    {\MakeUppercase{\romannumeral #1}}
\newcommand{\mycomment}[1]{}
\begin{document}

\title{Compositional Semantics for Open Vocabulary Spatio-semantic Representations}

\author{Robin Karlsson$^1$, Francisco Lepe-Salazar$^2$, Kazuya Takeda$^{1,3}$
\thanks{Manuscript received October 4, 2023}
\thanks{Revised manuscript received February 25, 2024}
\thanks{This work was supported by JST SPRING,
Grant Number JPMJSP212.}%
\thanks{$^{1}$Robin Karlsson and Kazuya Takeda are with the Graduate School of Informatics, Nagoya University, Japan (karlsson.robin@g.sp.m.is.nagoya-u.ac.jp)}%
\thanks{$^{2}$Francisco Lepe-Salazar is with Ludolab, M\'exico (flepe@ludolab.org).}%
\thanks{$^{3}$Kazuya Takeda is also with TIER IV, Japan (kazuya.takeda@tier4.jp).}
\thanks{Code will be publicly available upon acceptance}%
}



\maketitle

\begin{abstract}
Vision-language models (VLMs) transform environment percepts into vision-language semantics interpretable by LLMs.
However, completing complex tasks often requires reasoning about information beyond what is currently perceived.
We propose latent compositional semantic embeddings $z^*$ as a principled learning-based knowledge representation for queryable spatio-semantic memories.
We mathematically prove that $z^*$ can always be found, and that the optimal $z^*$ is the centroid for any set $\mathcal{Z}$. We derive a probabilistic bound for estimating separability of related and unrelated semantics. We prove that $z^*$ is discoverable  from visual appearance and singular descriptions by iterative gradient descent.
We experimentally verify our findings on four embedding spaces including CLIP and SBERT. Our results show that $z^*$ can represent up to 10 semantics encoded by SBERT, and up to 100 semantics for ideal uniformly distributed high-dimensional embeddings.
We introduce three new datasets with overlapping semantics to show that common VLMs trained on conventional nonoverlapping annotations discover $z^*$. Our novel sufficient similarity inference method overcomes fundamental limitations of conventional inference, and improves higher-level overlapping semantic inference performance by 19.63 mIoU on average. 

\end{abstract}

\begin{IEEEkeywords}
Open-vocabulary segmentation, learned knowledge representation, object discovery, compositional semantics
\end{IEEEkeywords}

\section{Introduction}
\label{sec:introduction}

General-purpose mobile robots promise machines capable of safely completing tasks in novel environments without relying on exact human programmed instructions. A promising approach to realize general-purpose robots is to leverage large language models (LLMs)~\cite{ahn2022saycan, shah2022lm_nav, huang2022lm_zero_shot_planners, zeng2023socraticmodels, huang2023inner_monologue, liang2022code_as_policies, nottingham2023deckard, singh2023progprompt, brohan2023rt2} trained on internet scale information about the world.
LLMs compliment the weaknesses of conventional human programmed robots by enabling weakly specified goal definitions in natural language~\cite{ahn2022saycan}, hierarchical planning by program synthesis~\cite{liang2022code_as_policies, nottingham2023deckard, singh2023progprompt}, and reasoning with commonsense knowledge~\cite{brohan2023rt2}.

\begin{figure}
\centering
\includegraphics[width=0.48\textwidth] {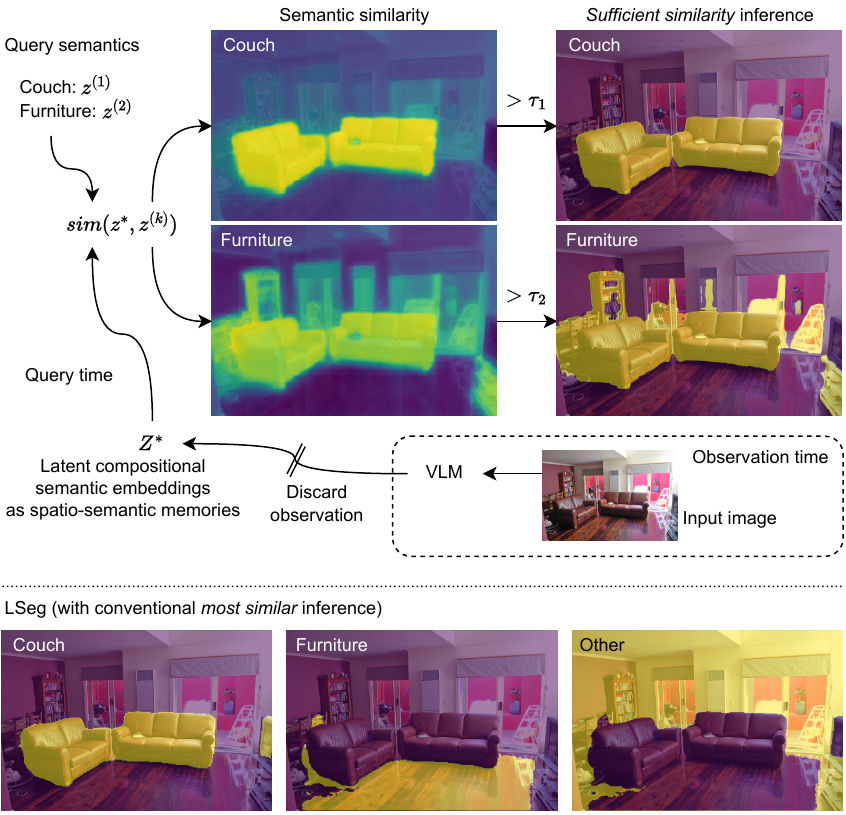}
\caption{We show that unconditional open vocabulary semantic segmentation VLM models learn to map images into latent compositional semantic embedding maps $Z^*$.
Our sufficient similarity inference method allows predicting overlapping semantics for any set of queried semantics $\{z^{(k)}\}$ by similarity with $z^*$, without requiring original input images. 
Conventional unconditional models like LSeg~\cite{li2022lseg} fail at inferring semantic overlap (\textit{couch} is also \textit{furniture}) and incomplete partitionings (\textit{other} is a flawed substitute for unspecified semantics).
Projecting $Z^*$ to spatial coordinates result in accurate and rich open-vocabulary spatio-semantic memories.}
\label{fig:font_fig}
\vspace{-5mm}
\end{figure}

The world knowledge and planning capabilities of LLMs is grounded in the external environment by aligned multi-modal vision-language models (VLMs)~\cite{radford2021clip, schuhmann2022openclip, li2022lseg, ghiasi2022openseg, xu2021zsseg, rao2021denseclip, zhou2022maskclip, ding2023maskclip, xu2023san, zou2023x_decoder, liang2023ovseg}. VLMs transform sensor percepts into an open set of semantic vision-language (VL) embeddings that are interpretable by similarity with other natural language embeddings. LLMs are thus able to receive and query information about the environment by words and sentences using VLMs as an interface. 
The predicted semantic embeddings need to be spatially grounded into spatio-semantic representations for precise spatial comprehension and reasoning~\cite{shah2022lm_nav, huang2023vlmaps, chen2023nlmapsaycan}. Spatial grounding of VL embeddings in 3D can be done by projecting 2D dense VL embedding maps to point clouds~\cite{ha2022semabs, peng2023OpenScene, jatavallabhula2023conceptfusion} or neural radiance fields (NeRF)~\cite{nur2023clipfields}.

Completing tasks generally requires information beyond what is currently observed. A spatio-semantic cognitive memory~\cite{mcnamara1989cognitive_maps}, or semantic scene representations~\cite{davison2018futuremapping}, enables a mobile robot to query semantic information about prior observations~\cite{ha2022semabs, peng2023OpenScene, jatavallabhula2023conceptfusion}, to navigate~\cite{huang2023vlmaps}, and do planning by language-based reasoning~\cite{brohan2023rt2}. Common spatio-semantic representations for mobile robots are 3D reconstruction~\cite{xia2018gibson_env}, object-centric, topological maps~\cite{chen2021topological_planning}, scene graphs~\cite{armeni20193d_scene_graph}, and top-down metric grid maps~\cite{karlsson2023pred_wm}.

The spatio-semantic environment representation for general-purpose mobile robots needs to satisfy three properties:
First, the representation needs to encode rich open-set semantic object descriptions~\cite{wu2009perception_and_concepts, barsalou2012human_conceptual_system, eysenck2020cognitive}. For narrow problems like object avoidance in constrained environments, it may suffice to detect and represent an object by one of a fixed set of classes like \textit{table}. A general-purpose agent~\cite{brohan2023rt2} however, requires a richer compositional representation of the object including alternative names like \textit{desk}, properties like \textit{rigid}, and affordances like \textit{flat surface}, all of which cannot be manually annotated during the system development phase.
Secondly, the representation needs to support querying of overlapping semantics, such as a \textit{dog} also being an \textit{animal}. Overlapping semantics must be learnable from independent observations or datasets without relying on human customization effort limiting scaleability~\cite{lambert2020mseg}.
Third, the representation must be efficient in terms of storage. Spatio-temporal accumulations of raw observations rapidly grow into an unreasonable amount of data~\cite{davison2018futuremapping}. To keep the environment representation compact, observations need to be abstracted into declarative semantic memories~\cite{rosch1976objects_in_natural_categories, binder2011neurobiology_sem_mem, eysenck2020cognitive}.
An additional practically beneficial property is explicit environmental representation. Explicit representations can communicate to humans robots' environmental understanding, intended plan of actions, along with interpretable factors for decision making. Explicit representations also allow humans to provide precise spatially grounded instructions to robots.


In this paper, we investigate latent compositional semantics as a means to compactly represent objects by rich semantic descriptions within explicit environment representations. 
We prove that mathematical properties of high-dimensional hyper-spheres enable a single compositional semantic embedding $z^*$ to define a set of semantic text descriptions encoded into semantic embeddings $\mathcal{Z} = \{z^{(1)}, z^{(K)}\}$.
Our experiments verify that a single embedding $z^*$ can robustly represent 10 semantically related real-world embedded text descriptions, and up to 100 randomly sampled embeddings for ideal uniformly distributed embedding spaces.
Based on our findings, we propose a new perspective on unconditioned dense VL embedding prediction models~\cite{li2022lseg} as a scalable, robust, and learnable neural approximations of semantic networks~\cite{Quillian1961semantic_networks} for knowledge representation.

Our contributions are three-fold:
\begin{itemize}
    \item A mathematical analysis proving that latent compositional semantic embeddings $z^*$ is a principled representation for rich object descriptions by a set of semantics $\mathcal{Z}$. The optimal $z^*$ for $\mathcal{Z}$ is simply the centroid of $\mathcal{Z}$ and does not require contrastive learning.
    \item An empirical investigation of four VL embedding spaces in terms of uniformity, alignment, and their capacity to represent compositional semantics by $z^*$.
    \item Experiments proving $z^*$ are discoverable from visual appearance and singular descriptions by training unconditional dense VLMs and queried by sufficient similarity.
\end{itemize}

The rest of the paper is organized as follows. Sec.~\ref{sec:related_works} explains how compositional semantics connects several fields of artificial intelligence. In Sec.~\ref{sec:compsitional_semantics} we introduce compositional semantics and mathematical properties. A brief presentation of dense VLMs used for inferring compositional semantics from observations is given in Sec.~\ref{sec:dense_VLMs}. We explain experiments and results in Sec.~\ref{sec:experiments}-\ref{sec:results}. Finally, we summarize our findings in Sec.~\ref{sec:conclusions}.



\section{Background and Related works}
\label{sec:related_works}

\noindent \textbf{Knowledge representation.}
A general-purpose intelligent agent needs to store information about the world in a practically useful form for reasoning and task completions. This problem is called knowledge representations. An ontology is a framework for organizing and representing knowledge into a hierarchy of categories or concepts.
Philosophers and artificial intelligence scientists commonly recognize six types of knowledge~\cite{russell2020aima4}: concrete objects including things and stuff, abstract categories for organizing objects in terms of similarity by shared properties, measurements for ordering of properties, and events, fluents, and time points specify temporally changeable statements.


First-order logic (FOL)~\cite{McCarthy1958FOL_AI} and extensions like fuzzy~\cite{Zadeh1965fuzzy_logic} and modal logic~\cite{Kripke1959modal_logic} traditionally express an object $x$ being a member of a category $Category$ as $Category(x)$ or $x \in Category$. 
Semantic networks~\cite{Quillian1961semantic_networks} is a subset of FOL designed to represent knowledge as a directed graph of objects and categories. Objects are associated to one or more categories by $MemberOf(\cdot, \cdot)$ relations. Categories are associated to other categories to form a taxonomic hierarchy. The hierarchy of categories allows objects to inherit semantic descriptions from higher-level categories, implying that an object that is a chicken is also a bird (but not the other way around):
\begin{equation}
    Chicken(x) \Rightarrow Bird(x).
\end{equation}
Frames~\cite{Minsky1975frames} extend Semantic networks with inheritable default attribute values like $height=1$ and properties $CanFly=True$ similar to object-oriented programming.

Semantic networks have several practical limitations.
First, semantic vagueness is an inherent aspect of object descriptions as explained in Sec.~\ref{sec:introduction}. Expressing degree of membership is challenging in purely logical representations.
Secondly, the problem of inferring correct and diverse category associations from perception is not addressed.
Finally, a complete ontology encompassing the entire world does not exist. A scalable method for learning and revising a diverse set of category associations from incomplete and noisy data is needed.

We propose compositional semantic embeddings as a principled and scalable approach to learn compact and semantically diverse object descriptions from uncurated data. 

\noindent \textbf{Natural language processing.}
The study of using natural language as an interface for human-machine communication, and how to enable machines to leverage human written knowledge, is called natural language processing (NLP). Natural language is ambiguous and sentence correctness is not perfectly decidable by rules~\cite{russell2020aima4}. Language models (LM)~\cite{devlin2019bert, reimers2019sbert} instead learn to predict the likelihood $p(\mathcal{X})$ of any sequence of text tokens~$\mathcal{X}$ according to a natural language dataset.

Word embeddings~\cite{mikolov2013wordvec, pennington2014glove} substitute non-semantic word tokens by a semantic vector representing the meaning of the word. Word embeddings are discovered from maximizing similarity of embeddings of co-occuring words~\cite{harris1954wc_co_occurance}.
Contextual representations~\cite{peters2018contextual_word_rep} extends word embeddings by encoding context from surrounding words.
Large langauge models (LLM)~\cite{quoc2014distr_repr_sentences, devlin2019bert, reimers2019sbert} can generate semantic embeddings out of entire sentences.
Our approach differs from word and sentence embeddings as we represent a set of semantic embeddings representing an object description by a single compositional semantic embedding.

Clustering~\cite{maas2010clustering_word_vec} and mixture models~\cite{blei2003lda, hoffman2010online_lda} in NLP discover groups of semantically similar text data. The Latent Dirichlet Allocation (LDA) model~\cite{blei2003lda} parses documents into mixtures of discovered latent topics that allow a finer semantic similarity search. 
A generative probabilistic mixture model $p(z)$ approximates the distributions of semantic embeddings $z \in \mathcal{Z}$ by $K$ mixture components $p_k(z)$ weighted by the probability $\pi_k$ that each mixture component is sampled
\begin{equation}
    p(z) = \sum_{k=1}^K \pi_k \; p_k(z).
\end{equation}
The optimal model is a mixture of Dirac delta distributions $p_k(z) = \delta(z - z^{(k)})$ with number of mixtures $K$ equaling the number of semantics in the distribution $\mathcal{Z}$. As the set of possible semantics in natural languages are unbounded, a common distribution approximation is the Gaussian mixture model (GMM) with $K \ll |\mathcal{Z}|$ components $p_k(z) = \mathcal{N}(\mu_k, \Sigma_k)$ representing the $K$ best semantic clusters.
However, this approximation has practical limitations. The required clusters $K$ is generally not known. Optimizing the mixture model $p(z)$ is challenging. Storing the the mixture distribution parameters or all $K$ semantic embeddings $\mu_k$ can be inefficient.

Our compositional semantics approach instead leverage properties of high-dimensional hyperspheres to find an optimal semantic embedding $z^*$ akin to clustering. The vector $z^*$ defines $p(z \in \mathcal{Z})$ by similarity instead of approximating the entire distribution $p(z)$. Our approach has mathematical guarantees of optimality, and can represent a large set of semantics by a single embedding while optimizable by gradient descent.

\noindent \textbf{Vision-language modeling.}
Multimodal models that semantically interpret images and text by a unified embedding space are called vision-language models (VLMs). 
Global description generating VLMs~\cite{radford2021clip} consist of a visual $Enc_V()$ and language encoder $Enc_L()$. Both encoders are co-trained to generate a semantically similar visual and text embedding $z_v$ and $z_t$ for an input image $x$ and text $t$ in an aligned embedding space $Z$. Alignment enables VLMs to be used as an interface to query or express contents of visual data in natural language. Semantic correspondence between $z_v$ and $z_l$ is measured by cosine similarity. The encoders are typically trained on internet-scale image captioning datasets using contrastive learning.
Global description models have many usages like image-text matching, multimodal search, multimodal generative modeling~\cite{li2023blip2}, and visual-question-answering~\cite{liu2023llava}.
However, outputs are not spatially grounded in the input image and therefore have limitations for tasks requiring precise spatial information such as navigation~\cite{shah2022lm_nav}, manipulation~\cite{ahn2022saycan}, and mapping~\cite{huang2023vlmaps}.

Dense description VLMs~\cite{li2022lseg, ghiasi2022openseg, xu2021zsseg, rao2021denseclip, zhou2022maskclip, ding2023maskclip, xu2023san, zou2023x_decoder, liang2023ovseg} generate aligned embeddings for every image pixel for fitting semantics to object boundaries. 
MaskCLIP~\cite{zhou2022maskclip} aims to leverage the strong generalization power of global description VLMs by removing the global pooling layer.
However, the output is considerably noisy and have limited practical usefulness for robotics tasks.

One approach to generate dense descriptions is to use a region proposal (RP) model~\cite{cheng2022mask2former}. The RP model predicts a set of object crops that are interpreted by a global VLM~\cite{xu2021zsseg}. The resulting global embedding is projected onto all pixels covered by the region.
The object crop approach works well for object-centered image inputs typical for indoor robotics environments, but less so for large and complex scenes requiring multi-scale object perception~\cite{zhong2022regionclip, liang2023ovseg}. Computational cost is high due to performing inference for every region separately.

Another direction of work instead trains a new vision model $f_\theta()$ with and architecture and optimization scheme designed for dense feature representation.
%
LERF~\cite{kerr2023lerf} grounds language embeddings in a neural radiance field (NeRF)~\cite{mildenhall2020nerf}, allowing querying semantics in 3D environment representations.
%
Open-vocabulary (OV) object detectors localize predicted VL embeddings to bounding boxes~\cite{gu2022vild}.
Works related to open-vocabulary semantic segmentation can be categorized into two types. Conditional OV semantic segmentation models~\cite{rao2021denseclip, ding2022zegformer, lueddecke2022clipseg, zou2023x_decoder} allows fine-grained query guided by additional text and/or image input. One drawback is that conditional inference require the original image. Unconditional methods~\cite{ghiasi2022openseg, li2022lseg, xu2023san} learns to predict general embedding maps such that the likelihood is maximized over the training dataset. Contrary to global embedding models~\cite{radford2021clip}, unconditional semantic segmentation models are trained on relatively small densely annotated datasets. The expressiveness of unconditionally predicted embeddings is not yet deeply understood.
%

In this paper we present an interpretation of unconditional OV semantic segmentation predictions as latent compositional semantic embeddings $z^*$. We show that the representation $z^*$ combines the compactness of unconditional inference, the expressiveness of conditional inference, and the capacity to represent semantic object descriptions of length $K$.

\noindent \textbf{Spatio-semantic representations.}
%
Mobile robots typically perform planning for spatial tasks by localizing its pose within a map~\cite{thrun2006stanley}. ICP~\cite{besl1992icp} or SLAM~\cite{randall1986slam} by modern implementations~\cite{jatavallabhula2020grad_slam, vizzo2023kiss_icp} is the conventional approach to map 3D environments by matching sequential point clouds and accumulating them into a common vector space.
Semantic SLAM not only estimate the geometry but also the semantics of the environment or an object~\cite{mccormac2017semanticfusion}.
The 3D representation can be projected onto a 2D birds-eye-view (BEV) map convenient for navigation tasks~\cite{schulter2018learning_to_look_around_obj}. The image-like 2D map representation is suitable for predictive generative modeling~\cite{karlsson2023pred_wm}.
Until recently, semantic mapping approaches were limited to predefined sets of semantic classes, and thus to narrow tasks.

Open-vocabulary spatial representation methods encode maps by VL embeddings instead of class embeddings. The VL embeddings are typically generated by a pretrained global VLM~\cite{jatavallabhula2023conceptfusion}, open-vocabulary object detector~\cite{chen2023nlmapsaycan}, or a dense VLM~\cite{ha2022semabs, huang2023vlmaps, peng2023OpenScene}. The open-vocabulary approach in principle allows querying any task-relevant semantics stored in the VL embeddings measuring cosine similarity with a text query embedding.
NeRFs implicitly represents 3D objects and environments by a neural network~\cite{mildenhall2020nerf, ost2021nerf_dynamic, martinbrualla2020nerf_wild} and have recently been extended represent open-vocabulary semantics~\cite{nur2023clipfields}.
Integrating LLMs opens up new possibilities for spatio-semantic reasoning based on a top-down perceptual feedback loop~\cite{chen2023nlmapsaycan, pi2023detgpt} similar to the human vision-for-perception system~\cite{gibson1979ecological_visual_perception, milner2008two_visual_systems, zhixian2022modeling_ventral_dorsal}.

Our work presents an interpretation of VL embeddings learned by dense open-vocabulary models as latent compositional semantic embeddings $z^*$. We show that $z^*$ can combine the generality of global description VLMs like CLIP~\cite{radford2021clip} with the accuracy of dense VLMs like LSeg~\cite{li2022lseg}.

\noindent \textbf{Cognitive psychology and philosophy.}
Our idea of latent compositional semantics has strong support in cognitive psychology, neuroimaging, and philosophy.
Semantic memories are derived from an agent's experiences but is characterized by its abstract and conceptual nature, devoid of ties to any specific encounter~\cite{binder2011neurobiology_sem_mem}. Our approach implements the idea of semantic memories into a computational learning framework.
Semantic concepts are organized into hierarchies~\cite{eysenck2020cognitive} and processed in relation to perceivable context~\cite{wu2009perception_and_concepts, barsalou2012human_conceptual_system}. Our work shows how machines can learn hierarchical concepts from independent visual observations.
Philosophers argue that real world objects are generally not perfectly described by a single category, but by fuzzy semantic descriptions determined by degree of semantic membership~\cite{wittgenstein1953phil_inv, lakoff1987women_fire_dangerous, schwartz1977naming_necessity_natural_kinds}. We provide a computational framework to learn such object descriptions from incomplete descriptions.
See the supplementary material for further details.

\section{Compositional semantics}
\label{sec:compsitional_semantics}
In this section, we first present the idea of compositional semantics, and how a single vector $z^* \in \mathbb{R}^D$ implicitly represents a diverse set of semantic object descriptions $\mathcal{Z}$.
In Sec.~\ref{sec:comp_properties_uniform}-\ref{sec:comp_properties_nonuniform} we derive properties of compositional semantic embeddings $z^*$ for uniform and non-uniform embedding distributions based on mathematical analysis of high-dimensional hyperspheres.
Finally, in Sec.~\ref{sec:comp_disc_grad_descent} we analyze practical discoverability of compositional semantics in real world VL embeddings spaces by iterative gradient descent.

\subsection{Compositional object representations}

Knowledge representations aim to describe concrete objects by membership to abstract semantic categories. Semantic networks are a common object description representation encumbered by practical limitations. We propose compositional semantics as an efficient and practical vector space representation for describing objects by a potentially large set of semantic categories by a scaleable learning-based method. Compositionality means that complex expressions, such as sentences or functions, can be determined or understood based on the meanings of their individual parts~\cite{jenssen1997compositionality}. 

Our proposed framework for compositional semantics is shown in Fig.~\ref{fig:compositional_semantics}. The objective is to find a hyperspherical latent compositional semantic embedding $z^* \in S^{D-1}$ for an object which is similar to all semantic embeddings in the set $\mathcal{Z} = \{ z^{(1)}, \ldots, z^{(K)} \}$ that broadly describe the object (see Sec.~\ref{sec:introduction}). Semantic similarity is defined in terms of separation distance in the embedding space $ S^{D-1}$. Distances between embeddings on unit hyperspheres in Euclidean vector spaces are conveniently represented by cosine similarity
\begin{equation}
\label{eq:cos_sim}
    \cos \omega = \frac{\langle z^*, z^{(k)} \rangle}{||z^*|| \; ||z^{(k)}||} = (z^*)^T z^{(k)}.
\end{equation}

\begin{figure}
\centering
\includegraphics[width=0.42\textwidth] {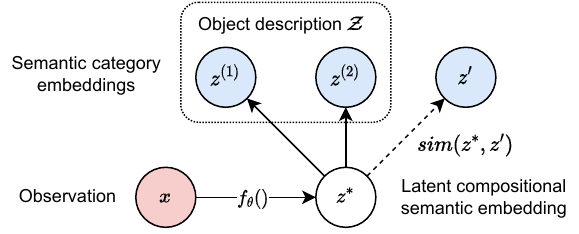}
\vspace{-2mm}
\caption{The compositional semantics framework. An observation $x$ is mapped into an embedding $z^*$ that specifies an object description $\mathcal{Z}$ in terms of interpretable semantic categories $z^{(k)}$ through fuzzy membership by similarity.}
\label{fig:compositional_semantics}
\vspace{-5mm}
\end{figure}

An observation $x$ is mapped into a compositional semantic embedding $z^*$ discovered by a learned one-to-one mapping function $f_\theta(x)$.
The optimal embedding $z^*$ is found by maximizing the mean cosine similarity~\eqref{eq:cos_sim} over all describing semantics $z \in \mathcal{Z}$. We presume the distribution $p(z)$ is approximate uniformly distributed.
In this paper we show that contrastive optimization by minimizing~\eqref{eq:cos_sim} over negative samples $z'$ is not required if $\mathcal{Z}$ is known. The mathematical properties of high-dimensional hyperspheres ensure that any other randomly sampled embeddings is very likely to be dissimilar to $z^*$. The optimal compositional semantic embedding $z^*$ thus separates the set of describing semantics $\mathcal{Z}$ from all other semantics $z' \sim \mathit{U}(S^{D-1})$.
Observations $x$ denote any observable representation including image pixel regions.

During inference, the representation $z^*$ for an observation~$x$, implicitly encodes $\mathcal{Z} = \{z^{(1)}, \ldots, z^{(T)}\}$ concatenated from past independent learning samples $(x^{(t)}, z^{(t)})$. From the perspective of knowledge representation, $z^*$ implicitly encodes the degree of membership for any queried semantic $z$ by semantic distance or equivalently cosine similarity~\eqref{eq:cos_sim} :
\begin{equation}
\label{eq:membership_by_sim}
    MemberOf(x, z) \propto sim(z^*, z) \;\; z^* := f_\theta(x).
\end{equation}
The set of inferred $\hat{\mathcal{M}}$ and original $\mathcal{M}$ set of object descriptions are approximately equal
\begin{gather}
    \hat{\mathcal{M}} = \{ MemberOf(x, \hat{z}) \; | \; \hat{z} \in \hat{\mathcal{Z}} \} \\
    \mathcal{M} = \{MemberOf(x, z) \; | \; z \in \mathcal{Z} \} \\
    |\hat{\mathcal{M}} \cup \mathcal{M}| \simeq |\mathcal{M}|
\end{gather}
as the set of inferred sufficiently similar semantic description embeddings are approximately equal
\begin{equation}
\label{eq:compositional_semantics_membership_set}
    \hat{\mathcal{Z}} = \{ \hat{z} | sim(z^*, \hat{z}) > \tau \;\; \forall \hat{z} \in S^{D-1} \} \simeq \mathcal{Z}.
\end{equation}

The degree of membership by similarity~\eqref{eq:membership_by_sim} reflects the fact that real world objects rarely have a single, clear-cut semantic specification~\cite{wittgenstein1953phil_inv, lakoff1987women_fire_dangerous}.
The threshold of sufficient semantic membership $\tau$ is subjective and needs to be optimized in respect to a purpose or task~\cite{schwartz1977naming_necessity_natural_kinds}.
Note that the mapping $f_\theta(x)$ discovers $z^*$ from independent samples $(x^{(t)}, z^{(t)})$ by iterative gradient descent.

\subsection{Compositional properties for uniform distributions}
\label{sec:comp_properties_uniform}

VL embeddings are typically located on the surface of a high-dimensional unit hypersphere. In this section we analyse the compositional properties of VL embeddings spaces based on mathematics for high-dimensional probability distributions~\cite{vershynin2018high_dim_prob}.
%
%
%
We begin the analysis by formally defining latent compositional semantic embeddings $z^*$.

\begin{definition}\label{def:compositionality} A vector $z^* \in \mathbb{R}^D$ on the unit hypersphere $S^{D-1}$ is a compositional semantic embedding for a set of semantic embeddings $z \in \mathcal{Z}$ if 
\begin{equation}
\label{eq:compositionality}
    \E sim(z^*, z) > \E sim(z^*, z') \;\;\; \forall z \in \mathcal{Z}, z' \sim \mathit{U}(S^{D-1})
\end{equation}
%
\noindent
where $\mathit{U}(S^{D-1})$ is the uniform distribution over $S^{D-1}$.
\end{definition}

The following theorem specify the theoretically optimal $z^*$ embedding is simply a centroid.

\begin{theorem}\label{theorem:discoverability}[Discoverability \RomanNumeralCaps{1}] It is always possible to find the optimal compositional semantic embedding $z^* \in \mathbb{R}^{D\gg1}$ satisfying Definition~\ref{def:compositionality} as the centroid of the set of semantics $\mathcal{Z}$
\begin{equation}
\label{eq:optimal_comp_sem_centroid}
   z^* = \tfrac{1}{K} \sum_{i=1}^K z^{(i)} \;\; \forall z^{(i)} \in \mathcal{Z}. 
\end{equation}
\end{theorem}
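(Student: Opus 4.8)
The plan is to read ``optimal'' as stated in the text preceding the theorem, namely as maximizing the mean cosine similarity between $z^*$ and the members of $\mathcal{Z}$, and then to verify that this maximizer, the centroid, satisfies the separation inequality of Definition~\ref{def:compositionality}. Since every embedding lies on $S^{D-1}$, the cosine similarity in~\eqref{eq:cos_sim} reduces to the Euclidean inner product, so the objective $\tfrac{1}{K}\sum_{i=1}^K \langle z^*, z^{(i)}\rangle$ is \emph{linear} in $z^*$. First I would pull the sum inside the inner product to rewrite it as $\langle z^*, \bar z\rangle$ with $\bar z = \tfrac{1}{K}\sum_{i=1}^K z^{(i)}$, reducing the problem to maximizing a linear functional over the unit sphere.

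The core step is a single application of the Cauchy--Schwarz inequality: for any unit vector $v$ we have $\langle v, \bar z\rangle \le \|v\|\,\|\bar z\| = \|\bar z\|$, with equality exactly when $v = \bar z/\|\bar z\|$. This identifies the (normalized) centroid as the unique maximizer of the mean similarity, which is precisely the claim~\eqref{eq:optimal_comp_sem_centroid}; because cosine similarity is scale-invariant, the unnormalized centroid written in the theorem and its normalization specify the same optimal direction. I do not expect any difficulty in this step.

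Next I would evaluate both sides of~\eqref{eq:compositionality}. For the right-hand side, the uniform law $U(S^{D-1})$ is symmetric about the origin, so $\E_{z'}[z'] = 0$ and hence $\E_{z'}\langle z^*, z'\rangle = \langle z^*, \E_{z'}[z']\rangle = 0$: the expected similarity with an unrelated random embedding vanishes. For the left-hand side, substituting $z^* = \bar z/\|\bar z\|$ gives mean similarity $\langle z^*, \bar z\rangle = \|\bar z\|$. The separation inequality therefore collapses to the single scalar condition $\|\bar z\| > 0$.

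The main obstacle is establishing this strict positivity and pinning down the precise meaning of the mixed quantifier in Definition~\ref{def:compositionality}. The centroid norm $\|\bar z\|$ vanishes only in the degenerate case where the members of $\mathcal{Z}$ cancel exactly (for instance antipodally balanced embeddings); I would argue this does not arise for a set of \emph{semantically related} descriptions, which by hypothesis concentrate in a common region of the sphere rather than spanning opposite poles, so $\|\bar z\| > 0$ and the inequality is strict. If instead the definition is read pointwise, demanding $\langle z^*, z^{(i)}\rangle > 0$ for every individual $z^{(i)}$, the centroid alone does not suffice, and I would need the stronger assumption that $\mathcal{Z}$ lies within an open hemisphere (angular radius below $\pi/2$ about $\bar z$); I would flag this as the hypothesis under which the per-element version holds, while the mean version requires only $\|\bar z\| > 0$.
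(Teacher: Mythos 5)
Your proposal is correct, but it reaches the centroid and verifies Definition~\ref{def:compositionality} by a genuinely different route than the paper. For the optimization step, the paper converts the similarity maximization into a squared-distance minimization on the sphere and differentiates term by term (via the chain rule) to obtain the centroid as the stationary point; you instead exploit linearity of the objective in $z^*$ and apply Cauchy--Schwarz once, which is more elementary and additionally gives uniqueness of the maximizing direction for free. For the verification step the two arguments diverge more substantially: the paper invokes its Lemma~\ref{lemma:expected_sim} to put the right-hand side of~\eqref{eq:compositionality} at $D^{-1/2}$, lower-bounds the left-hand side by $\tfrac{1}{K}\bigl((K-1)D^{-1/2}+1\bigr)$ using the assumption that intra-set similarities are at least those of random vectors, and reduces the inequality to $D>1$; you instead use the antipodal symmetry of $\mathit{U}(S^{D-1})$ to conclude the right-hand side is exactly $0$, so the whole inequality collapses to $\|\bar z\|>0$. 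Your treatment of the right-hand side is arguably the more exact one (the expectation of the inner product with a uniformly random unit vector is zero; the paper's $1/\sqrt{D}$ is really a root-mean-square magnitude rather than a mean), and your explicit flagging of the degenerate antipodally-balanced case and of the pointwise-versus-mean reading of the quantifier in Definition~\ref{def:compositionality} surfaces assumptions that the paper leaves implicit in its requirement that members of $\mathcal{Z}$ be mutually at least as similar as random vectors. What the paper's route buys in exchange is a quantitative lower bound on $\E\, sim(z^*,z)$ in terms of $K$ and $D$ that feeds into its later separability analysis, whereas your argument establishes the qualitative claim with fewer moving parts.
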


\begin{proof}
    See Appendix A.  
\end{proof}

The proof is based on finding the $z^*$ maximizing cosine similarity by partially differentiating the equivalent minimum square distance.

A property of high-dimensional vector spaces is that any two random variable vectors are expected to be approximately orthogonal. The following lemma is used to prove Theorem~\ref{theorem:discoverability}

\begin{lemma}
\label{lemma:expected_sim}[Expected similarity] For two independent random vectors $Z^{(i)}$, $Z^{(j)}$ sampled from an isotropic high-dimensional distribution $Z \in \mathbb{R}^{D}$ with $D \gg 1$
\begin{equation}
\label{eq:expected_sim}
   \E sim(Z^{(i)}, Z^{(j)}) =  \frac{1}{\sqrt{D}}. 
\end{equation}
\end{lemma}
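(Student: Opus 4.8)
The plan is to reduce the cosine similarity to an inner product of unit vectors, exploit the rotational symmetry of an isotropic law to fix one of the two vectors, and then evaluate the remaining expectation by a coordinate-symmetry argument. Writing $\hat Z = Z/\lVert Z \rVert$ for the normalized vectors, definition~\eqref{eq:cos_sim} gives $sim(Z^{(i)}, Z^{(j)}) = \langle \hat Z^{(i)}, \hat Z^{(j)} \rangle$. Taking the canonical isotropic model $Z \sim \mathcal{N}(0, I_D)$, the direction $\hat Z$ is uniform on $S^{D-1}$ and its law is invariant under every orthogonal map. Since $Z^{(i)}$ and $Z^{(j)}$ are independent, I would condition on $\hat Z^{(i)} = u$ and apply the rotation sending $u$ to $e_1$; rotational invariance then shows the conditional distribution of the similarity is independent of $u$ and coincides with the law of the first coordinate $W_1$ of a single uniform unit vector $W$.

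The core of the proof is then the size of $W_1$, which I would obtain without any large-$D$ approximation. By symmetry all coordinates of a uniform unit vector share the same second moment, $\E[W_1^2] = \cdots = \E[W_D^2]$, while the constraint $\sum_{k=1}^{D} W_k^2 = \lVert W \rVert^2 = 1$ holds identically. Summing over $k$ gives $D\,\E[W_1^2] = \E[\lVert W \rVert^2] = 1$, hence $\E[W_1^2] = 1/D$. This second moment pins down the characteristic value of the similarity between two independent isotropic directions, and taking the square root yields exactly the constant $1/\sqrt{D}$ asserted in~\eqref{eq:expected_sim}.

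I expect the main obstacle to be the normalization step, i.e.\ transferring the clean spherical computation back to a general isotropic $Z$ satisfying only $\E[Z Z^\top] = I_D$. For the Gaussian model $Z/\lVert Z \rVert$ is exactly uniform on $S^{D-1}$ and independent of the norm, so no approximation is needed; for a general isotropic law the norm cannot be factored out exactly, and I would instead use the concentration of $\lVert Z \rVert^2$ around $D$ from~\cite{vershynin2018high_dim_prob} to show that replacing the random denominators $\lVert Z^{(i)} \rVert \, \lVert Z^{(j)} \rVert$ by $D$ perturbs the answer only at order $o(1/\sqrt{D})$ when $D \gg 1$. A second detail worth stating explicitly is that the reduction to $W_1$ uses only rotational symmetry of the direction of $\hat Z^{(j)}$, so the derivation applies verbatim to any isotropic distribution, which is the generality claimed in the lemma.
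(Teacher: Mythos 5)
Your proposal is correct in the same sense that the paper's own proof is, but it gets there by a genuinely different reduction. The paper never passes to a uniform direction: it works with the unnormalized vectors and uses only the second-moment identity $\E Z Z^T = I_D$, conditioning on $Z^{(j)} = z^{(j)}$ to obtain $\E \inner{Z^{(i)}}{z^{(j)}}^2 = \|z^{(j)}\|^2$, then a trace identity to get $\E \inner{Z^{(i)}}{Z^{(j)}}^2 = D$, and finally divides by the norms $\|Z^{(i)}\|\,\|Z^{(j)}\| \approx \sqrt{D}\sqrt{D}$. You instead rotate one vector to $e_1$ and compute $\E[W_1^2] = 1/D$ by coordinate exchangeability on the sphere; that computation is clean, and your handling of the normalization via concentration of $\|Z\|^2$ around $D$ is in fact more careful than the paper's, which simply substitutes $\sqrt{D}$ for the random norms without comment. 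Note that both you and the paper ultimately take the square root of a second moment: the literal expectation $\E \inner{\hat Z^{(i)}}{\hat Z^{(j)}}$ is exactly $0$ by sign symmetry (your own reduction exhibits it as $\E[W_1] = 0$), so what both arguments really establish is the root-mean-square value $\sqrt{\E\, sim^2} = 1/\sqrt{D}$, i.e.\ the typical magnitude of the similarity. Your proof therefore matches the lemma's intended reading exactly as the paper's does, and shares its (unstated) identification of expectation with RMS.

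One genuine gap sits in your closing remark. The reduction to $W_1$ requires the direction of at least one of the two vectors to be rotationally invariant in law, and second-moment isotropy $\E Z Z^T = I_D$ --- the hypothesis the paper actually invokes --- does not imply this: a uniform random sign vector $Z \in \{-1,+1\}^D$ is isotropic in that sense, yet its normalized direction is supported on $2^D$ discrete points, so your derivation does not apply ``verbatim to any isotropic distribution.'' The paper's conditioning-and-trace computation is precisely what buys that extra generality, since it consumes only the covariance structure. To close the gap, either restrict your rotation argument to rotationally invariant laws (as in your Gaussian model) or replace it with the conditional second-moment computation $\E\bigl[\inner{Z^{(i)}}{z^{(j)}}^2\bigr] = z^{(j)T} I_D z^{(j)}$, after which the rest of your argument, including the concentration step, goes through.
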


\begin{proof}
    See Appendix B.  
\end{proof}

The proof involves recognizing $Z$ as an isotropic distribution and computing the expectation of a dot product for two random vectors $Z^{(i)}$ and $Z^{(j)}$.

Next we derive a probabilistic bound defining the separability of a set $\mathcal{Z}$ of object descriptions and random descriptions $z'$ by similarity with the latent compositional semantic embedding $z^*$ for $\mathcal{Z}$.

\begin{theorem}
\label{theorem:prob_of_compositionality}[Probabilistic bound] The probability $P$ a compositional semantic embedding $z^*$ is more similar to all its semantic members $z \in \mathcal{Z}$ than any unrelated semantic embedding $z' \sim \mathit{U}(S^{D-1})$ is
\begin{equation}
\label{eq:prob_of_compositionality}
    P\left(sim(z^*, z) > sim(z^*, z')\right) = 1 - \tfrac{1}{2} I_{\sin^2(\theta_{min})}(\tfrac{D-1}{2}, \tfrac{1}{2})
\end{equation}
\noindent
where $I_x(a, b)$ is the regularized incomplete beta function and
\begin{equation}
\label{eq:theta_min}
    \theta_{min} = \arccos( sim(z^*, z_{min}) )
\end{equation}
is the angle $\theta_{min}$ defined by the least similar member
\begin{equation}
    z_{min} = \argmin ( sim(z^*, z) ) \; \forall z \in \mathcal{Z}.    
\end{equation}

\end{theorem}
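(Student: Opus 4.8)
The plan is to collapse the simultaneous requirement over the whole set $\mathcal{Z}$ into a single scalar event, and then to evaluate that event as the complement of a hyperspherical cap area, which is precisely what the regularized incomplete beta function measures. First I would observe that the condition $sim(z^*, z) > sim(z^*, z')$ holding for \emph{every} $z \in \mathcal{Z}$ is equivalent to the single inequality $sim(z^*, z') < sim(z^*, z_{min}) = \cos\theta_{min}$, since $z_{min}$ by definition attains the smallest similarity and hence imposes the tightest bound. This reduces a conjunction of $K$ events to one event involving only the fixed axis $z^*$ and the single random vector $z'$.

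Next I would characterize the law of the scalar $sim(z^*, z') = (z^*)^T z'$. Writing $z' \sim \mathit{U}(S^{D-1})$ and letting $\phi \in [0,\pi]$ denote the angle between $z'$ and $z^*$, rotational invariance of the uniform measure makes the marginal density of $\phi$ proportional to the surface area of the latitude $(D-2)$-sphere at colatitude $\phi$, namely $\sin^{D-2}\phi$ (from the standard spherical volume element). Because $\cos$ is strictly decreasing on $[0,\pi]$, the event $sim(z^*, z') < \cos\theta_{min}$ is identical to $\phi > \theta_{min}$, so the target probability is the normalized tail integral
\begin{equation}
P = \frac{\int_{\theta_{min}}^{\pi} \sin^{D-2}\phi \, d\phi}{\int_{0}^{\pi} \sin^{D-2}\phi \, d\phi} = 1 - \frac{\int_{0}^{\theta_{min}} \sin^{D-2}\phi \, d\phi}{\int_{0}^{\pi} \sin^{D-2}\phi \, d\phi},
\end{equation}
i.e. one minus the fractional area of the spherical cap of half-angle $\theta_{min}$ centered at $z^*$.

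The decisive step is to identify this cap fraction with $\tfrac{1}{2} I_{\sin^2\theta_{min}}(\tfrac{D-1}{2}, \tfrac{1}{2})$. I would apply the substitution $u = \sin^2\phi$ to the numerator, which converts $\sin^{D-2}\phi \, d\phi$ into $\tfrac{1}{2} u^{(D-3)/2}(1-u)^{-1/2} du$; reading off the exponents gives the incomplete beta integral with $a = \tfrac{D-1}{2}$ and $b = \tfrac{1}{2}$, so the numerator equals $\tfrac{1}{2} B(\sin^2\theta_{min}; \tfrac{D-1}{2}, \tfrac{1}{2})$. The denominator evaluates by the same substitution, using symmetry about $\pi/2$, to the complete beta function $B(\tfrac{D-1}{2}, \tfrac{1}{2})$. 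Dividing yields the cap fraction $\tfrac{1}{2} I_{\sin^2\theta_{min}}(\tfrac{D-1}{2}, \tfrac{1}{2})$ and therefore the claimed $P = 1 - \tfrac{1}{2} I_{\sin^2\theta_{min}}(\tfrac{D-1}{2}, \tfrac{1}{2})$.

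The main obstacle I anticipate is not any individual calculation but the bookkeeping of this beta-function identity together with its domain of validity: the map $u = \sin^2\phi$ is only monotone on $[0,\pi/2]$, so the clean form $\tfrac{1}{2} I_{\sin^2\theta_{min}}(\cdots)$ presupposes $\theta_{min} \le \pi/2$, i.e. positive least similarity. This is exactly the relevant regime when $z^*$ is the centroid of genuinely related, clustered semantics (Theorem~\ref{theorem:discoverability}), and I would state the assumption explicitly; for $\theta_{min} > \pi/2$ the cap exceeds half the sphere and the bracketed expression would have to be replaced accordingly. A secondary point worth justifying cleanly is the $\sin^{D-2}\phi$ colatitude density, which I would either derive from the spherical coordinate parametrization or cite, so that the reduction in the second step is fully self-contained.
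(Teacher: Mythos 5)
Your proof is correct and follows essentially the same route as the paper's: both reduce the conjunction over $\mathcal{Z}$ to the single event that $z'$ falls outside the hyperspherical cap of half-angle $\theta_{min}$ about $z^*$ (the paper's Lemma~\ref{lemma:hyperspherical_cap}), and both identify the cap's fractional area with $\tfrac{1}{2} I_{\sin^2(\theta_{min})}(\tfrac{D-1}{2},\tfrac{1}{2})$, the only difference being that you derive this beta-function identity from the $\sin^{D-2}\phi$ colatitude density whereas the paper cites it directly from the hyperspherical-cap literature. Your explicit caveat that the stated form presupposes $\theta_{min}\le\pi/2$ is a correct refinement that the paper leaves implicit.
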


\begin{proof}
    See Appendix C.  
\end{proof}

 The proof is based on noting that the probability $P$ a randomly sampled unrelated embedding $z'$ falsely in the set of semantic members $\mathcal{Z}$ is proportional to the area ratio of the hyperspherical cap $S^{D-1}_{cap}$ spanned by $z^*$ and $z_{min}$. The proof builds upon Lemma~\ref{lemma:expected_sim}~and~\ref{lemma:hyperspherical_cap}.

\begin{lemma}
\label{lemma:hyperspherical_cap}[Hyperspherical cap] The compositional semantic embedding $z^*$ and all semantic member embeddings $z \in \mathcal{Z}$ lie in a hyperspherical cap $S^{D-1}_{cap}$
\begin{equation}
\label{eq:hyperspherical_cap}
    \{z^*\} \cup \mathcal{Z} \in S^{D-1}_{cap} = \{ z \in \mathbb{R}^D : \|z\| = 1, \theta_z \le \theta_{min} \}.
\end{equation}
%
%
%
%
%
\end{lemma}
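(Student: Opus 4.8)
The plan is to verify the two membership claims directly from the definitions, since the cap $S^{D-1}_{cap}$ is constructed precisely to have angular radius $\theta_{min}$. First I would record the single structural fact that does all the work: on the interval $[0,\pi]$ the cosine is strictly decreasing, so $sim(z^*, z) = \cos\theta_z$ is a strictly decreasing function of the angle $\theta_z = \arccos(sim(z^*,z))$. Hence minimizing similarity is equivalent to maximizing the angle, and the member $z_{min} = \argmin_{z \in \mathcal{Z}} sim(z^*,z)$ is exactly the member subtending the largest angle with $z^*$. Therefore $\theta_{min}$ is the maximal angular distance from $z^*$ to any member of $\mathcal{Z}$, i.e. $\theta_{min} = \max_{z \in \mathcal{Z}} \theta_z$.

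With this in hand, member containment is immediate. Each $z \in \mathcal{Z}$ lies on the unit sphere, so $\|z\| = 1$, and by the previous step $\theta_z \le \theta_{min}$. Both conditions in the definition of $S^{D-1}_{cap}$ are met, so $\mathcal{Z} \subseteq S^{D-1}_{cap}$. For the pole itself, $z^*$ lies on $S^{D-1}$ (I work with the unit-normalized centroid of Theorem~\ref{theorem:discoverability}; cosine similarity is invariant to the scaling of its arguments, so this normalization changes no angle). The angle of $z^*$ with itself is $\theta_{z^*} = \arccos(sim(z^*,z^*)) = \arccos(1) = 0 \le \theta_{min}$, whence $z^* \in S^{D-1}_{cap}$. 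Combining the two inclusions gives $\{z^*\}\cup\mathcal{Z} \subseteq S^{D-1}_{cap}$, as claimed.

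I do not expect any serious obstacle: the statement is essentially a bookkeeping consequence of how $\theta_{min}$ and the cap are defined, and requires no high-dimensional geometry (unlike the probabilistic bound it feeds into). The only points requiring care are the direction of the monotonicity argument — the \emph{least} similar member corresponds to the \emph{largest}, not smallest, angle, which is exactly what makes $z_{min}$ the correct member for setting the cap radius — and the mild normalization subtlety for the raw centroid of Theorem~\ref{theorem:discoverability}, which is harmless because every quantity appearing in the lemma depends only on directions.
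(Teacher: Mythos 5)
Your proof is correct and follows essentially the same route as the paper's: both arguments reduce to the monotonicity of cosine on $[0,\pi]$, giving $sim(z^*,z) \ge sim(z^*,z_{min}) \Leftrightarrow \theta_z \le \theta_{min}$ for all $z \in \mathcal{Z}$, together with $sim(z^*,z^*)=1 \Leftrightarrow \theta_{z^*}=0$. Your explicit remark that the centroid must be unit-normalized for the $\|z\|=1$ condition (harmless since all angles are scale-invariant) is a small point of care the paper leaves implicit.
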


\begin{proof}
    See Appendix D.  
\end{proof}

We conclude that latent compositional semantic embeddings $z^*$ can always be found for VL embeddings. The goodness of $z^*$ can be measured by the probabilistic estimate ~\eqref{eq:prob_of_compositionality}

\subsection{Compositional properties for non-uniform distributions}
\label{sec:comp_properties_nonuniform}

The mathematical properties for latent compositional semantic embeddings $z^*$ in Sec.~\ref{sec:comp_properties_uniform} are derived for uniformly distributed embeddings. In this section, we analyze the validity of the results for non-uniform hyperspherical distributions.

\begin{proposition}[Discoverability \RomanNumeralCaps{2}]
\label{proposition:discoverability_2}
    It is always possible to find an optimal compositional semantic embedding $z^* \in \mathbb{R}^D$ for any non-uniform distribution $z \in p(z | z \in \mathbb{R}^{D>1}, \|z\| = 1)$ that is not singular.
\end{proposition}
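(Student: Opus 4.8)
The plan is to reduce the proposition to a single expected-inner-product maximization and then identify exactly which distributions are ruled out by the non-singularity hypothesis. First I would restate the compositionality requirement of Definition~\ref{def:compositionality} for a general distribution $p(z)$ by replacing the per-member condition with an expectation over members, so the goal becomes finding a unit vector $z^*$ with
\begin{equation}
\E_{z \sim p}\, sim(z^*, z) > \E_{z' \sim \mathit{U}(S^{D-1})}\, sim(z^*, z').
\end{equation}
Because all embeddings lie on $S^{D-1}$, cosine similarity coincides with the inner product as in~\eqref{eq:cos_sim}, and linearity of expectation gives $\E_{z \sim p}\, sim(z^*, z) = \inner{z^*}{\mu}$, where $\mu := \E_{z \sim p}[z]$ is the (unnormalized) mean of the distribution.

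Next I would evaluate the right-hand side. By the rotational symmetry of the uniform measure on the sphere its mean vanishes, $\E_{z' \sim \mathit{U}(S^{D-1})}[z'] = 0$, so $\E_{z' \sim \mathit{U}(S^{D-1})}\, sim(z^*, z') = \inner{z^*}{0} = 0$ for every candidate $z^*$. The whole condition therefore collapses to the single scalar inequality $\inner{z^*}{\mu} > 0$, decoupling the existence question from the detailed shape of $p$.

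I would then solve the constrained problem $\max_{\|z^*\| = 1} \inner{z^*}{\mu}$ by Cauchy--Schwarz: the maximum value is $\|\mu\|$ and is attained at the normalized mean $z^* = \mu / \|\mu\|$. This is precisely the population analogue of the empirical centroid of Theorem~\ref{theorem:discoverability}, so the same centroid construction extends verbatim to non-uniform $p$. Substituting back gives $\E_{z \sim p}\, sim(z^*, z) = \|\mu\| > 0$, which establishes the strict inequality and certifies $z^*$ as a valid compositional embedding.

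The step I expect to be the main obstacle is pinning down the non-singularity hypothesis, since it is the only place the argument can break. The normalized mean $z^* = \mu / \|\mu\|$ is well-defined exactly when $\mu \neq 0$; conversely, if $\mu = 0$ then $\inner{z^*}{\mu} = 0$ for all unit $z^*$ and no embedding can achieve the strict separation. I would therefore argue that ``not singular'' should be read as the non-vanishing of the first moment, $\E_{z \sim p}[z] \neq 0$ --- equivalently, $p$ is not antipodally symmetric, with the uniform distribution as the excluded boundary case. The remaining care is to confirm that a nonzero mean is the sole obstruction, i.e. that every such $p$, however concentrated or dispersed, yields the strictly positive margin $\|\mu\|$, making the non-singularity condition both necessary and sufficient.
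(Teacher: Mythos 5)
Your argument is internally consistent, but it proves a different statement from Proposition~\ref{proposition:discoverability_2}, because you have changed both sides of the comparison relative to what the paper's proof (Appendix E) actually uses. On the right-hand side, the point of Sec.~\ref{sec:comp_properties_nonuniform} is precisely that in a non-uniform embedding space the unrelated semantics $z'$ are themselves drawn from $p(z)$, so $\E sim(z^*,z')$ is some constant $C$ that can be much larger than the uniform-sphere value; keeping $z'\sim\mathit{U}(S^{D-1})$ and concluding the threshold is $0$ discards the very difficulty the proposition is meant to address. On the left-hand side, you replace the finite member set $\mathcal{Z}$ by the population mean $\mu=\E_p[z]$, which loses the self-similarity term: the paper bounds $\E sim(z^*,z)\ge \tfrac{1}{K}\left((K-1)C+1\right)$, where the $+1$ comes from $z^{(k)}\cdot z^{(k)}=1$, and this strictly exceeds $C$ whenever $C<1$. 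That $\tfrac{1}{K}$ margin, not the size of $\|\mu\|$, is the engine of the paper's argument, and it is what makes the claim hold for essentially every $p$.

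The consequence is that your characterization of the excluded case is the wrong one. The paper's ``singular'' distribution is the point mass (two independent draws coincide, $C=1$, so members and non-members cannot be separated); you instead exclude exactly the distributions with $\mu=0$. These classes are nearly disjoint: a symmetric antipodal two-point distribution is non-uniform and non-singular, so the proposition asserts a $z^*$ exists (and the paper's criterion $C<1$ is satisfied), yet your construction returns nothing because $\mu=0$ and no unit vector attains a strictly positive $\inner{z^*}{\mu}$; conversely, the Dirac distribution has $\|\mu\|=1$, so your argument certifies it even though the proposition explicitly rules it out. The ``main obstacle'' you flagged is therefore exactly where the proof breaks, but the repair is not to reinterpret ``singular'' as $\mu\neq 0$ --- it is to compare the members' average similarity, including the exact self term, against the background similarity $C$ measured within the same non-uniform distribution, which reduces the whole proposition to the condition $C<1$.
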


\begin{proof}
    See Appendix E.  
\end{proof}

The proof is based on showing that Definition~\ref{def:compositionality} holds also when expected similarity is higher than for uniformly distributed embeddings spaces as given by Lemma~\ref{lemma:expected_sim}.

The shape of the non-uniform density $p(z)$ of common VLMs is a product of optimization by contrastive learning with random negative sampling~\cite{radford2021clip}.
Few general properties can be inferred for non-uniform densities. So~et~al.\cite{so2022geodesic} finds that vision and text CLIP embeddings are distributed in separate modality-specific hyperspherical caps. Wang~et~al.~\cite{wang2021understanding_cl} identifies the uniformity-alignment dilemma stating that perfect uniformity and alignment cannot be simultaneously achieved due to semantically similar but randomly sampled false negatives.


We found that using the probabilistic bound~\eqref{eq:prob_of_compositionality} for highly non-uniform VL embedding densities $p(z)$ results in poor estimates. The reason is that unrelated embeddings are far more similar than those for uniform distributions.
%
Instead we propose a statistical sampling-based approach to obtain a probabilistic estimate for \eqref{eq:compositionality} in Definition~\ref{def:compositionality} without requiring to estimate the non-uniform density $p(z)$.
The probability in \eqref{eq:prob_of_compositionality} is estimated by sampling $N$ random semantic embeddings $z \sim p(z)$ and counting the number of samples being within the hyperspherical cone $S^{D-1}_{cap}$ spanned by $z^*$ and $z_{min}$ \eqref{eq:hyperspherical_cap} such that 
\begin{equation}
\label{eq:statistical_satisfiability}
    P\left(sim(z^*, z) > sim(z^*, z')\right) \simeq \frac{1}{N} \sum_{i=1}^N\mathbf{1}_{S^{D-1}_{cap}}(z^{(i)}).
\end{equation}

Our empirical results show that latent compositional semantic embeddings $z^*$ are useful for all tested non-uniform VL embedding distributions. Additionally, the empirical estimate~\eqref{eq:statistical_satisfiability} provides an accurate measure of goodness.

\subsection{Compositional discovery by gradient descent}
\label{sec:comp_disc_grad_descent}

The mathematical properties in Sec.~\ref{sec:comp_properties_uniform}-\ref{sec:comp_properties_nonuniform} are derived while presuming all member semantics $z \in \mathcal{Z}$ are known. In this section we verify the possibility of finding latent compositional semantic embeddings $z^*$ by iterative optimizing $z^*$ one $z$ at a time, instead of averaging the set $\mathcal{Z}$ as in~\eqref{eq:optimal_comp_sem_centroid}.

\begin{proposition}[Discoverability \RomanNumeralCaps{3}]
\label{proposition:gradient_descent}

It is always possible to find an optimal compositional semantic embedding $z^* \in \mathbb{R}^D$ by iterative gradient descent optimization
\begin{equation}
    z^{*(t+1)} = z^{*(t)} - \lambda \; \nabla_{z^*} \left[ \sum_{i=1}^L sim(z^{*(t)}, z^{(i)}) \right]
\end{equation}
over random subsets $\tilde{\mathcal{Z}}^{(t)} \subseteq \mathcal{Z}, \; |\tilde{\mathcal{Z}}^{(t)}| = L$ given a sufficiently small learning rate $\lambda$.
\end{proposition}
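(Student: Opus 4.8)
The plan is to reduce the stated stochastic iteration to a well-understood convex problem and then invoke Theorem~\ref{theorem:discoverability}. First I would exploit the fact that for unit-norm embeddings the similarity is the inner product, so that maximizing $\sum_i sim(z^*, z^{(i)})$ is equivalent to minimizing the summed squared distance, since $\|z^* - z^{(i)}\|^2 = 2 - 2\,sim(z^*, z^{(i)})$ on $S^{D-1}$. Under this equivalence the bracketed update is a descent step on a quadratic objective whose unconstrained minimizer is exactly the centroid $\tfrac{1}{K}\sum_i z^{(i)}$ identified in Theorem~\ref{theorem:discoverability}; the full-batch gradient is $\nabla_{z^*}\sum_i \|z^* - z^{(i)}\|^2 = 2\bigl(L z^* - \sum_i z^{(i)}\bigr)$, so each step contracts $z^*$ toward the (sub-)centroid. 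This equivalence also fixes the sign convention: the minus sign in the update is correct precisely for the distance formulation, equivalently gradient ascent on the similarity objective.

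Next I would establish that the subset gradients form an unbiased estimator of the full gradient direction. Drawing $\tilde{\mathcal{Z}}^{(t)} \subseteq \mathcal{Z}$ uniformly at random with $|\tilde{\mathcal{Z}}^{(t)}| = L$, linearity of expectation gives
\begin{equation}
    \E_{\tilde{\mathcal{Z}}^{(t)}}\!\left[\frac{1}{L}\sum_{i \in \tilde{\mathcal{Z}}^{(t)}} z^{(i)}\right] = \frac{1}{K}\sum_{i=1}^K z^{(i)},
\end{equation}
so the stochastic update agrees in expectation (up to a positive scale factor $L/K$ that is absorbable into $\lambda$) with the full-batch direction toward the centroid. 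This places the iteration within the Robbins--Monro stochastic approximation framework: a convex quadratic loss, unbiased gradient estimates of bounded variance (finite because $\mathcal{Z}$ is finite and lies on the bounded set $S^{D-1}$), and a sufficiently small step size $\lambda$.

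I would then conclude convergence. For a fixed sufficiently small $\lambda$ the iteration converges to a neighborhood of the centroid whose radius shrinks with $\lambda$ and the gradient-noise variance; for a diminishing schedule satisfying $\sum_t \lambda_t = \infty$ and $\sum_t \lambda_t^2 < \infty$ it converges to the centroid almost surely by standard stochastic approximation results. Normalizing the iterate onto $S^{D-1}$ then yields $z^*/\|z^*\|$, which by Definition~\ref{def:compositionality} and Theorem~\ref{theorem:discoverability} is the sought optimal compositional semantic embedding.

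The \emph{main obstacle} is the interaction between the stochastic dynamics and the unit-norm constraint: the clean quadratic analysis lives in $\mathbb{R}^D$, whereas the optimum must lie on $S^{D-1}$. I expect the cleanest route is to optimize in the ambient space and normalize only at the end, arguing that the unconstrained minimizer of the distance loss and the constrained maximizer of the linear similarity objective are colinear (both point along $\sum_i z^{(i)}$), so normalization commutes with the optimization target. The one genuinely excluded case is a singular $\mathcal{Z}$ whose centroid vanishes, $\sum_i z^{(i)} = 0$, where the direction is undefined and no maximizer exists, matching the non-singularity caveat of Proposition~\ref{proposition:discoverability_2}.
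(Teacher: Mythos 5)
Your proposal is correct and shares the paper's core reduction --- rewriting the similarity objective as the convex quadratic $\sum_i \|z^* - z^{(i)}\|^2$ whose minimizer is the centroid of Theorem~\ref{theorem:discoverability}, and handling the subsets by observing that their average is an unbiased estimator of that centroid --- but the convergence machinery differs. The paper computes the Hessian of the objective explicitly (obtaining $K I_D$, hence positive semidefinite) and then invokes the deterministic global convergence guarantee for convex problems, treating the stochasticity with a single expectation identity; you instead place the iteration in the Robbins--Monro framework and state the standard step-size conditions, which is the technically proper way to conclude almost-sure convergence of a \emph{stochastic} gradient iteration rather than of its expected trajectory. Your treatment also addresses two points the paper glosses over: the interaction with the unit-norm constraint (the paper asserts the feasible set~\eqref{eq:set_of_zs} is convex, which the sphere is not; your ambient-space-then-normalize argument via colinearity with $\sum_i z^{(i)}$ is the cleaner resolution) and the sign convention in the stated update, which as written descends on similarity and is only coherent under the squared-distance reading you adopt. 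The paper's route buys a short, self-contained computation; yours buys a convergence statement that actually matches the randomized iteration being analyzed, at the cost of importing stochastic approximation results. Both correctly exclude the degenerate case $\sum_i z^{(i)} = 0$ only implicitly; your explicit link to the non-singularity caveat of Proposition~\ref{proposition:discoverability_2} is a worthwhile addition.
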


\begin{proof}
    See Appendix F.  
\end{proof}

The proof is based showing that the cosine similarity optimization objective
is convex, and noting that all convex problems have global convergence guarantees.

\section{Unconditional Dense Vision-language model}
\label{sec:dense_VLMs}

An unconditional dense VLM is a learned one-to-one function $f_{\theta}()$ that maps images $x \in \mathbb{R}^{3 \times H \times W}$ to dense embedding maps $Z \in \mathbb{R}^{D \times H \times W}$ consisting of aligned VL embeddings~$z_{(i,j)} \in \mathbb{R}^D$ at point $(i,j)$ in the image frame. The output $Z$ represents observations abstracted into declarative semantic memories~\cite{rosch1976objects_in_natural_categories, binder2011neurobiology_sem_mem, eysenck2020cognitive} which maximizes the predictive likelihood over past observations given $x$, without conditioning on an input text~\cite{ghiasi2022openseg} or an image query~\cite{lueddecke2022clipseg}.
Unconditional prediction~\cite{radford2021clip, li2022lseg} is necessary for efficient open vocabulary spatio-semantic memory representations as explained in Sec.~\ref{sec:introduction}.
However, since objects have not one but several semantic descriptions~\cite{wu2009perception_and_concepts, barsalou2012human_conceptual_system, eysenck2020cognitive}, a single semantic embedding $z_{(i,j)}$ must simultaneously encode a multitude of task-relevant semantics.

We investigate the feasibility of discovering compositional semantics by $f_\theta$ as an image encoder-decoder dense prediction deep neural network architecture. To maximize the generality of our findings, $f_\theta$ is implemented by conceptually simple, general, and well-performing SOTA modules as shown in Fig.~\ref{fig:dense_vlm_model}. A vision transformer (ViT) backbone~\cite{dosovitskiy2021vit} extracts visual features from image observations $x$. We use the ViT-Adapter~\cite{chen2022vitadapter} as a dense prediction task adapter to enhance the ViT backbone with vision-specific inductive biases. The adapter outputs a set of multi-scale feature maps $\mathcal{F} = \{F_1, F_2, F_3, F_4\}$. A Feature Pyramid Network (FPN)~\cite{lin2017fpn} integrates $\mathcal{F}$ into a single feature map $F$. A simple decoder head bilinearly upsamples $F$ into the input image resolution and do a final $1 \times 1$ convolution to project features into normalized semantic embedding maps $Z$.

\begin{figure}
\centering
\includegraphics[width=0.48\textwidth] {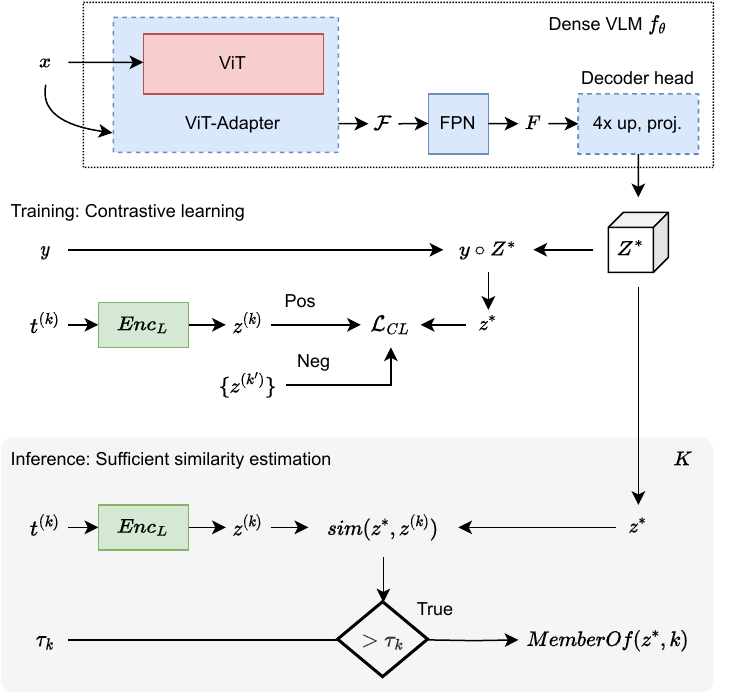}
\vspace{-2mm}
\caption{The unconditional dense VLM $f_\theta$ transforms an image $x$ into an embedding map $Z^*$ representing compositional semantics $z^*$ for every pixel. During training, predictions $z^*$ for elements masked by $y$ are optimized to be similar to targets $z^{(k)}$ and dissimilar to all other semantics $z^{(k')}$ generated from text descriptions $t^{(k)}$ by a language encoder $Enc_L$. During inference, $z^*$ allows querying multiple semantics $K$ by similarity. All elements above the similarity threshold $\tau_k$ are members of the semantic group $k$. $\tau_k$ is set to maximize likelihood of predicting past observations.}
\label{fig:dense_vlm_model}
\vspace{-6mm}
\end{figure}


In the remainder of this section, we explain how in fact dense latent compositional semantic embedding maps $Z^*$ are discovered by an unconditional dense VLM $f_{\theta}$ when trained to predict $Z$.


\subsection{Model training}

The model $f_\theta$ is initialized with pretrained backbone parameters and trained end-to-end to predict semantic embedding maps $Z$ from images $x$ and dense annotations. Annotations consists of $K$ types of paired semantic text descriptions $t^{(k)}$ encoded into semantic embeddings $z^{(k)}$, and boolean image masks $y \in \mathbb{B}^{H \times W}$ specifying which image elements $x_{(i,j)}$ are associated with $t$. We denote an observation $n$ as a tuple $(x, t, y)_n$.

We use the contrastive learning objective
\begin{equation}
\label{eq:cl_objective}
    \mathcal{L}_{CL} = \E \left[ -\log \frac{e^{ sim(z, z^{(k)} ) / \tau}}{ e^{ sim(z, z^{(k)}) / \tau } + \sum_{k'} e^{ sim(z, z^{(k')}) / \tau } } \right]
\end{equation}
with temperature $\tau$ to optimize $f_\theta$ to predict $z$ similar to $z^{(k)}$ for elements specified by $y$ and negative samples $z^{(k')}$.
The set of negative samples $\mathcal{Z}' = \mathcal{Z} \setminus \{z^{(k)}\}$ consists of all known annotated semantics $\mathcal{Z}$ in the dataset except the current sample annotation $z^{(k)}$. We optimize over all $\mathcal{Z}'$ for every batch instead of randomly sampling negatives as the number of semantics are tractable. We note that the general objective \eqref{eq:cl_objective} is equivalent to the previously proposed cross-entropy over softmax normalized embedding similarity objective~\cite{li2022lseg} 
\begin{equation}
\label{eq:lseg_objective}
    \mathcal{L}_{CE} = \E \left[ - (c^{(k)})^T \log \sigma \left( sim(\hat{z}, z^{(k)}) / \tau \right) \right]
\end{equation}
with $c^{(k)}$ denoting one-hot class or description type vectors, $\sigma()$ as the softmax function. The equivalence is apparent by zeroing out all but the one-hot true target embedding resulting from the dot product sum and expanding the softmax function
\begin{equation}
    \mathcal{L}_{CE} = \E \left[ 0 - \ldots - \log \frac{e^{ sim(\hat{z}, z^{(k)} ) / \tau}}{ \sum^K_{k'=1} e^{ sim(\hat{z}, z^{(k')}) / \tau } } - \ldots - 0 \right].
\end{equation}

Next we verify that the objective \eqref{eq:cl_objective}, and equivalently \eqref{eq:lseg_objective}, can learn latent compositional semantic embeddings $z^*$ from independent nonoverlapping descriptions. Proposition~\ref{proposition:gradient_descent} proves that $z^*$ can be learned by gradient descent. We can therefore presume without loss of generality, that two descriptions $z^{(k_1)}$ and $z^{(k_2)}$ appear simultaneously in a batch for two independent but visually similar objects $x_1$ and $x_2$ mapping to the same latent semantic $z$. The combined loss is
\begin{equation}
\label{eq:compositional_semantics_by_contrastive_learning}
\begin{split}
    \mathcal{L} &= \frac{1}{2} \left( \mathcal{L}_{CL}(z, z^{(k_1)}) + \mathcal{L}_{CL}(z, z^{(k_2)}) \right) \\
    &= \frac{1}{2} \left( - \log \frac{1}{c} e^{sim(z, z^{(k_1)})} - \log \frac{1}{c} e^{sim(z, z^{(k_2)})} \right) \\
    &= - \frac{1}{2} \left( \log e^{sim(z, z^{(k_1)})} + \log e^{sim(z, z^{(k_2)})} - 2 \log c \right) \\
    &= - \frac{1}{2} \left( sim(z, z^{(k_1)}) + sim(z, z^{(k_2)}) \right) + \log c 
\end{split}
\end{equation}

As the optimal $z$ minimizing \eqref{eq:compositional_semantics_by_contrastive_learning} equals the centroid of $z^{(k_1)}$ and $z^{(k_2)}$, the optimal $z$ is the optimal latent compositional semantic embedding $z^*$ as proved by Theorem~\ref{theorem:discoverability}. We conclude that the iterative optimization by objective \eqref{eq:cl_objective} enable $f_\theta$ to learn $z^*$ from visual similarity and nonoverlapping descriptions.

\subsection{Sufficient similarity inference method}
\label{sec:sufficient_similiarty}

Conventional semantic segmentation presume an input image can be sensibly partitioned into a set of $K$ fixed hand-crafted semantic classes $\mathcal{E}_K$. Each class $k$ is represented by a one-hot embedding $e^{(k)} \in \mathcal{E}_K$. The embeddings $\mathcal{E}_K$ span different dimensional axes on the positive quadrant of the unit hypersphere $S^{K-1}$. The partitioning is computed by assigning class $k^*$ represented by the most similar embedding $e^{(k)}$ to each predicted embedding $\hat{z}$
\begin{equation}
\label{eq:most_similar_eval}
    k^* = \argmax_{k} \left[ sim(\hat{z}, e^{(k)}) \right] \;\; \forall e^{(k)} \in \mathcal{E}_K.
\end{equation}
Open world semantic segmentation likewise partition the image by assigning the most similar semantic $k^*$ in a set of word semantics $\mathcal{Z}_K$ distributed over $S^{K-1}$. The semantics of $\mathcal{E}_K$ defines the orthogonal basis of $S^{K-1}$ and thus limit queryable semantics to $\mathcal{E}_K$. In contrast, learning word semantics results in a semantically meaningful orthogonal basis, allowing any $\mathcal{Z}_K$ to be defined and queried at inference time.

Boyi~et~al.\cite{li2022lseg} identifies two weaknesses of the most similar partitioning approach: First, any object such as a \textit{window-on-a-building-facade} can both be described as a ``window'' as well as part of a ``building'' at a higher-level. Hard partitioning by highest similarity haphazardly predicts one or the other. Secondly, hard partitioning assigns a semantic to every image element even if all queried semantics have low similarity with the image content. An example is a \text{dog} queried by the two semantics ``grass'' and ``toy'' is interpreted as ``toy''. The use of abstract word semantics like ``other'' as a substitute for unspecified semantics is not a principled solution as there is no guarantee that the similarity between $z^*$ and queried but unrelated semantic $z^{(k)}$ is less similar than the ambiguous semantic meaning of ``other''
\begin{equation}
    sim(z^*, z_{other}) \overset{?}{>} sim(z^*, z^{(k)}) \;\; \forall z^{(k)} \in \mathcal{Z}_K.
\end{equation}

We propose sufficient similarity as a principled inference method that allows semantic overlap and empty query results by a single compositional semantic embedding $z^*$.
To evaluate semantic membership by sufficient similarity, we first compute a set of similarity threshold values $T~=~\{\tau_1, \ldots, \tau_K\}$ for each known semantic $k \in \{1, \ldots, K\}$. The value of $\tau_k$ is found by maximizing the likelihood that $sim(z^*, z^{(k)}) > \tau_k$ for true elements in past observations.
At evaluation time, instead of selecting the most similar semantic $k^*$ in \eqref{eq:most_similar_eval}, any similarity with semantic $z^{(k)}$ higher than the threshold $\tau_k$ are deemed sufficiently similar to be a member of the semantic group $k$
\begin{equation}
\label{eq:sufficient_semantic_similarity}
    sim(z^*, z^{(k)}) > \tau_k \Rightarrow MemberOf(z^*, k).
\end{equation}

We view \eqref{eq:sufficient_semantic_similarity} as a practical probabilistic approach for finding the mathematically derived hyperspherical cap $S^{D-1}_{cap}$~\eqref{eq:hyperspherical_cap} defining the membership set $\mathcal{Z}$~\eqref{eq:compositional_semantics_membership_set} that maximizes the likelihood over past observations.
For simplicity, we estimate a single maximum likelihood value $\tau_k$ for each semantic $k$ by a logistic regression model. To fit the model, a set of similarity values $sim(z^*, z)$ are sampled from positive and negative elements of $k$ using annotations $y$. The optimal $\tau_k$ is the the decision boundary or $sim(z^*, z)$ value that best separates positive and negative elements according to the model
\begin{equation}
    p \Bigl(MemberOf\left( sim(z^*, z) , k  \right)  \Bigl) = 0.5.
\end{equation}
However, our method is not fundamentally limited to estimating only single constant values $\tau_k$.
To the best of our knowledge, the similarity thresholding method proposed by Cui~et~al.~\cite{cui2020sim_thresh} is closest to our approach. While Cui~et~al. uses thresholding for uncertainty estimation, we propose thresholding to determine category membership~\eqref{eq:membership_by_sim}.

%


\section{Experiments}
\label{sec:experiments}

In the following sections we set out to verify the properties and discoverability of latent compositional semantic embeddings $z^*$ derived in Sec.~\ref{sec:compsitional_semantics}~and~\ref{sec:dense_VLMs}.
We perform experiments on embedding spaces for four representative models: the VLMs CLIP~\cite{radford2021clip}, OpenCLIP~\cite{schuhmann2022openclip}, X-Decoder~\cite{zou2023x_decoder}, and the language model SBERT~\cite{reimers2019sbert}. Additionally, we do experiments on ideal uniformly distributed embedding spaces $\mathit{U}(S^{D-1})$.

The first set of experiments investigates the lower bound capacity for $z^*$ to represent an arbitrary set of $K$ randomly sampled VL embeddings.
The second experiment set analyzes the expected capacity for $z^*$ to represent realistic object descriptions consisting of $K$ semantic.
The third experiment set verifies that $z^*$ are discoverable from visual appearance and nonoverlapping semantic annotations when training open-vocabulary unconditional dense VLMs.

Our experiments and results are relevant for any spatio-semantic representation method based on projecting image semantics to 3D or 2D coordinates~\cite{huang2023vlmaps, karlsson2023pred_wm}. We propose evaluation using a general-purpose image dataset such as COCO~\cite{caesar2018coco_stuff} is a more useful performance indicator than narrow 3D datasets. The expected semantic prediction performance for image-frame and projected semantic point cloud inference is verified to be equivalent~\cite{karlsson2023pred_wm}.


\subsection{Experiment 1: $z^*$ from random semantics}
\label{sec:exp_1_random_semantics}

We estimate the lower bound capacity of $z^*$ by sampling $K$ embeddings $z$ forming an object description set $\mathcal{Z}$ of random semantics. Next we compute the optimal $z^*$ by~\eqref{eq:optimal_comp_sem_centroid} and measure the separation between 100,000 randomly sampled embeddings $Z'$ and the set $Z$ represented by $z^*$. Separability is measured by~\eqref{eq:statistical_satisfiability} approximating~\eqref{eq:prob_of_compositionality} for uniform and nonuniform distributions. High separability means it is highly unlikely any non-related random semantic is closer to $z^*$ than the least close related semantic $z_{min} = \argmin(Z)$. In other words, $z^*$ has high cosine similarity~\eqref{eq:cos_sim} only with semantics $z$ of the object description $\mathcal{Z}$. See Fig.~\ref{fig:compositional_semantics} for a visualization.

To generate embeddings, we sample words from the English lexical database WordNet~\cite{miller1995wordnet}. Sampled words gets transformed into a semantic embedding $z$ by the models' language encoders. Ideally distributed embeddings are sampled uniformly on the hypersphere $\mathit{U}(S^{D-1})$.
CLIP experiments use the largest available \textsl{ViT-L/14@336px} model generating 768 dimensional embeddings. For OpenCLIP we use the largest \textsl{ViT-bigG-14} model, pretrained on the \textsl{laion2b\_s39b\_b160k} dataset, generating 1280 dimensional embeddings. We use the largest available \textsl{Focal-L} model for X-Decoder outputting 512 dimensional embeddings. SBERT uses the \textsl{all-mpnet-base-v2 } model generating 768 dimensional embeddings.
We measure performance of object descriptions $\mathcal{Z}$ of varying length $K$ to estimate maximum representation capacity of $z^*$ for each embedding space. Two additional experiments for higher dimensional embeddings explore the theoretical limits of $z^*$ for large object descriptions $\mathcal{Z}$. Each experiment is repeated 1000 times for statistical estimation.

\subsection{Experiment 2: $z^*$ from object descriptions}
\label{sec:exp_2_realistic_descriptions}

The second set of experiments estimates the separability for 500 realistic object descriptions consisting of related semantics. Each object description is generated by an LLM~\footnote{Claude 2 provided by Anthropic (\url{claude.ai})} and consists of $K$ descriptive semantics including names, properties, and affordances. The results represent expected representational capacity of $z^*$ in practical real-world application.
%


\begin{table}[t]
\begin{center}
\caption{Compositional semantics expectation delta}
\label{tab:exp_1_expectation_delta}
\begin{tabular}{c c c c}
                        & \multicolumn{3}{c}{$\Delta \E = \E sim(z^*, z) - \E sim(z^*, z')$}                                                                             \\ \hline
\multicolumn{1}{|l|}{Distribution} & \multicolumn{1}{c|}{$K=3$} & \multicolumn{1}{c|}{$K=5$} & \multicolumn{1}{c|}{$K=10$} \\ \hline
\multicolumn{1}{|l|}{CLIP~\cite{radford2021clip} $^{\text{b}}$} & \multicolumn{1}{l|}{0.135 (0.043)} & \multicolumn{1}{l|}{0.083 (0.032)} & \multicolumn{1}{l|}{0.043 (0.024)} \\ \hline
\multicolumn{1}{|l|}{OpenCLIP~\cite{schuhmann2022openclip} $^{\text{c}}$} & \multicolumn{1}{l|}{0.245 (0.032)} & \multicolumn{1}{l|}{0.156 (0.027)} & \multicolumn{1}{l|}{0.083 (0.020)} \\ \hline
\multicolumn{1}{|l|}{X-Decoder~\cite{zou2023x_decoder} $^{\text{a}}$} & \multicolumn{1}{l|}{0.236 (0.045)} & \multicolumn{1}{l|}{0.150 (0.037)} & \multicolumn{1}{l|}{0.080 (0.027)} \\ \hline
\multicolumn{1}{|l|}{SBERT~\cite{reimers2019sbert} $^{\text{b}}$} & \multicolumn{1}{l|}{0.397 (0.040)} & \multicolumn{1}{l|}{0.273 (0.035)} & \multicolumn{1}{l|}{0.156 (0.026)} \\ \hline
\multicolumn{1}{|l|}{$\mathit{U}(z)_{D = 768}$} & \multicolumn{1}{l|}{0.577 (0.012)} & \multicolumn{1}{l|}{0.447 (0.010)} & \multicolumn{1}{l|}{0.316 (0.080)} \\ \hline
\multicolumn{1}{|l|}{$\mathit{U}(z)_{D = 1280}$} & \multicolumn{1}{l|}{0.577 (0.010)} & \multicolumn{1}{l|}{0.447 (0.080)} & \multicolumn{1}{l|}{0.316 (0.006)} \\ \hline
\multicolumn{1}{|l|}{$\mathit{U}(z)_{D = 2048}$} & \multicolumn{1}{l|}{0.577 (0.008)} & \multicolumn{1}{l|}{0.447 (0.006)} & \multicolumn{1}{l|}{0.316 (0.005)} \\ \hline
\multicolumn{1}{|l|}{$\mathit{U}(z)_{D = 4096}$} & \multicolumn{1}{l|}{0.577 (0.005)} & \multicolumn{1}{l|}{0.447 (0.005)} & \multicolumn{1}{l|}{0.316 (0.003)} \\ \hline
\multicolumn{4}{l}{a: $D = 512$, b: $D = 768$, c: $D = 1280$}
\end{tabular}
\end{center}
\vspace{-5mm}
\end{table}

\begin{table}[t]
\begin{center}
\caption{Separation of related and nonrelated random semantics}
\label{tab:exp_1_separation}
\begin{tabular}{c c c c}
                        & \multicolumn{3}{c}{$P\left(sim(z^*, z) > sim(z^*, z')\right)$}                                                                             \\ \hline
\multicolumn{1}{|l|}{Distribution} & \multicolumn{1}{c|}{$K=3$} & \multicolumn{1}{c|}{$K=5$} & \multicolumn{1}{c|}{$K=10$} \\ \hline
\multicolumn{1}{|l|}{CLIP~\cite{radford2021clip} $^{\text{b}}$} & \multicolumn{1}{l|}{0.954 (0.117)} & \multicolumn{1}{l|}{0.533 (0.261)} & \multicolumn{1}{l|}{0.187 (0.143)} \\ \hline
\multicolumn{1}{|l|}{OpenCLIP~\cite{schuhmann2022openclip} $^{\text{c}}$} & \multicolumn{1}{l|}{1.000 (0.001)} & \multicolumn{1}{l|}{0.907 (0.115)} & \multicolumn{1}{l|}{0.400 (0.180)} \\ \hline
\multicolumn{1}{|l|}{X-Decoder~\cite{zou2023x_decoder} $^{\text{a}}$} & \multicolumn{1}{l|}{0.990 (0.0223)} & \multicolumn{1}{l|}{0.750 (0.1682)} & \multicolumn{1}{l|}{0.301 (0.156)} \\ \hline
\multicolumn{1}{|l|}{SBERT~\cite{reimers2019sbert} $^{\text{b}}$} & \multicolumn{1}{l|}{1.000 (0.002)} & \multicolumn{1}{l|}{0.977 (0.043)} & \multicolumn{1}{l|}{0.647 (0.188)} \\ \hline
\multicolumn{1}{|l|}{$\mathit{U}(z)_{D = 768}$} & \multicolumn{1}{l|}{1 (0)} & \multicolumn{1}{l|}{1 (0)} & \multicolumn{1}{l|}{1 (0)} \\ \hline
\multicolumn{1}{|l|}{$\mathit{U}(z)_{D = 1280}$} & \multicolumn{1}{l|}{1 (0)} & \multicolumn{1}{l|}{1 (0)} & \multicolumn{1}{l|}{1 (0)} \\ \hline
\multicolumn{1}{|l|}{$\mathit{U}(z)_{D = 2048}$} & \multicolumn{1}{l|}{1 (0)} & \multicolumn{1}{l|}{1 (0)} & \multicolumn{1}{l|}{1 (0)} \\ \hline
\multicolumn{1}{|l|}{$\mathit{U}(z)_{D = 4096}$} & \multicolumn{1}{l|}{1 (0)} & \multicolumn{1}{l|}{1 (0)} & \multicolumn{1}{l|}{1 (0)} \\ \hline
\multicolumn{4}{l}{a: $D = 512$, b: $D = 768$, c: $D = 1280$}
\end{tabular}
\end{center}
\vspace{-5mm}
\end{table}

\begin{table}[t!]
\begin{center}
\caption{Large object description expectation delta and separation}
\label{tab:exp_1_large_dim}
\begin{tabular}{c c c c}
                                        & \multicolumn{3}{c}{$K = 100$}                                                          \\ \hline
\multicolumn{1}{|l|}{\multirow{2}{*}{Distribution}} & \multicolumn{1}{c|}{\multirow{2}{*}{$\Delta \E$}} & \multicolumn{2}{c|}{$P\left(sim(z^*, z) > sim(z^*, z')\right)$}                         \\ \cline{3-4} 
\multicolumn{1}{|l|}{} & \multicolumn{1}{c|}{}                            & \multicolumn{1}{c|}{Empirical~\eqref{eq:statistical_satisfiability}} & \multicolumn{1}{c|}{Bound~\eqref{eq:prob_of_compositionality}} \\ \hline
\multicolumn{1}{|l|}{CLIP~\cite{radford2021clip} $^{\text{b}}$}           & \multicolumn{1}{l|}{0.004 (0.008)} & \multicolumn{1}{l|}{0.011 (0.011)} & \multicolumn{1}{l|}{1.0$^\star$} \\ \hline
\multicolumn{1}{|l|}{OpenCLIP~\cite{schuhmann2022openclip} $^{\text{c}}$} & \multicolumn{1}{l|}{0.009 (0.007)} & \multicolumn{1}{l|}{0.013 (0.012)} & \multicolumn{1}{l|}{1.0$^\star$} \\ \hline
\multicolumn{1}{|l|}{X-Decoder~\cite{zou2023x_decoder} $^{\text{a}}$} & \multicolumn{1}{l|}{0.008 (0.009)} & \multicolumn{1}{l|}{0.013 (0.012)} & \multicolumn{1}{l|}{1.0$^\star$} \\ \hline
\multicolumn{1}{|l|}{SBERT~\cite{reimers2019sbert} $^{\text{b}}$} & \multicolumn{1}{l|}{0.018 (0.009)} & \multicolumn{1}{l|}{0.015 (0.013)} & \multicolumn{1}{l|}{1.0$^\star$} \\ \hline
\multicolumn{1}{|l|}{$\mathit{U}(z)_{D = 768}$}                           & \multicolumn{1}{l|}{0.010 (0.001)} & \multicolumn{1}{l|}{0.605 (0.148)} & \multicolumn{1}{l|}{0.612} \\ \hline
\multicolumn{1}{|l|}{$\mathit{U}(z)_{D = 1280}$}                          & \multicolumn{1}{l|}{0.010 (0.002)} & \multicolumn{1}{l|}{0.838 (0.116)} & \multicolumn{1}{l|}{0.863} \\ \hline
\multicolumn{1}{|l|}{$\mathit{U}(z)_{D = 2048}$}                          & \multicolumn{1}{l|}{0.010 (0.002)} & \multicolumn{1}{l|}{0.967 (0.040)} & \multicolumn{1}{l|}{0.988} \\ \hline
\multicolumn{1}{|l|}{$\mathit{U}(z)_{D = 4096}$}                          & \multicolumn{1}{l|}{0.010 (0.001)} & \multicolumn{1}{l|}{1.000 (0.001)} & \multicolumn{1}{l|}{1.000} \\ \hline
\multicolumn{4}{l}{a: $D = 512$, b: $D = 768$, c: $D = 1280$, $^\star$: Error from non-uniformity}
\end{tabular}
\end{center}
\vspace{-5mm}
\end{table}

\subsection{Experiment 3: $z^*$ from visual appearance}
\label{sec:exp_3_visual_appearance}

The third experiment set investigates if $z^*$ can be discovered from independent observations of visual appearance paired with nonoverlapping annotations.
We present two experiments to answer this question.

First, we evaluate how well four representative SOTA unconditional open vocabulary semantic segmentation models can infer overlapping compositional semantics. Each model is trained on conventional non-overlapping annotations.
ZSSeg~\cite{xu2021zsseg} generates region proposals by SAM~\cite{kirillov2023sam} and uses CLIP~\cite{radford2021clip} to predict semantic embeddings $z$.
X-Decoder~\cite{zou2023x_decoder} is a conditional VLM that predicts $N$ object mask proposals and match masks with the most likely query semantic. We convert X-Decoder into an unconditional model by integrating all $N$ VL mask semantics by the mask probability at each pixel location. We use largest available \textsl{Focal-L} model trained on COCO captions and dense labels.
LSeg~\cite{li2022lseg} is a dense VLM trained to output unconditional VL embedding maps. We use the released \textsl{ViT-L/16} model weights trained on seven datasets including COCO-Stuff~\cite{caesar2018coco_stuff}, ADE20K~\cite{zhou2017ade20k}, and Mapillary~\cite{neuhold2017mapillary_vistas}.
ViT-Adapter~\cite{chen2022vitadapter} is a recent general-purpose dense computer vision architecture we implement as our trainable model. The ViT backbone is initialized with BEiT model weights~\cite{bao2022beit}. The model is trained with SBERT embeddings on the same seven dataset as LSeg for 160K iterations on four A6000 GPUs with a total batch size 4 and $0.75\text{e-}4$ learning rate.
We create three modified datasets with overlapping semantics following the three level label hierarchy proposed in the COCO-Stuff dataset~\cite{caesar2018coco_stuff} (e.g. a \textit{car-object} is described as either ``car'', ``vehicle'', or ``outdoor''). We emphasize that none of the models have been explicitly trained on the additional overlapping semantics.
See the supplementary material for further details.

Secondly, we estimate the performance gained by directly training a model with overlapping annotations on the COCO-Stuff dataset~\cite{caesar2018coco_stuff} as an upper performance bound. We train four ViT-Adapter models using CLIP or SBERT embeddings with two dataset variants. The first variant uniformly samples annotations from one of the three label hierarchy levels. The second variant weights sampling so all annotation classes are equally likely. Uniform and weighted sampling represent the long-tail distribution over low- and high-level semantics, respectively. Each image annotation is sampled only once for each sample, meaning compositional semantics must be learned by generalizing from independent observations of visual appearance.
Additionally, we estimate separability \eqref{eq:statistical_satisfiability} and distance between the learned $z^*$ embeddings with the optimal $z^*_{opt}$ computed as the centroid of the ground truth overlapping semantics~\eqref{eq:optimal_comp_sem_centroid}.

We evaluate compositional semantics by mIoU computed using the conventional \textsl{most similar} partitioning and our proposed \textsl{sufficient similarity} method introduced in Sec.~\ref{sec:sufficient_similiarty}. To use sufficient similarity we precompute $\tau_k$ for every semantic category $k$ from 2000 samples from the training dataset covering all annotation semantics.

We emphasize that our study concerns the theoretical understanding and practical feasibility of learning compositional semantics across a variety of models. We implement the trainable ViT-Adapter based model to investigate the highest achievable SOTA performance with existing model architectures, but do not consider it our core research contribution.


\section{Results}
\label{sec:results}

Here we present results demonstrating the capacity of latent compositions embeddings $z^*$ for representing rich object descriptions in different embedding spaces. We also demonstrate that $z^*$ can be learned by unconditional dense VLM from nonoverlapping annotations.

\subsection{Experimental results 1: $z^*$ from random semantics}
\label{sec:results_1_random_semantics}
Here we provide results and findings for $z^*$ representing sets $\mathcal{Z}$ of randomly sampled semantics $z$.
Table~\ref{tab:exp_1_expectation_delta} shows the expected similarity between optimal $z^*$ \eqref{eq:optimal_comp_sem_centroid} and object description semantics $z$ is always higher than for unrelated semantics $z'$. The results verifies that $z^*$ for all embedding distributions and object description sizes $K$ satisfy Definition~\ref{def:compositionality} and Theorem~\ref{theorem:discoverability} for finding the optimal $z^*$.
Table~\ref{tab:exp_1_separation} shows lower bound separability of related $z \in \mathcal{Z}$ and non-related semantics $z' \in \mathcal{Z'}$ by $z^*$, verifying Theorem~\ref{theorem:prob_of_compositionality}. All embedding spaces allow reliable separability for small object descriptions $K \le 3$, verifying Proposition~\ref{proposition:discoverability_2} for finding $z^*$ for non-uniform distributions. For intermediate descriptions $K \le 5$ separability of CLIP embeddings reduces to chance. SBERT maintains strong separability. Only ideal uniform distributions achieve perfect separability for large descriptions $K \le 10$.
Table~\ref{tab:exp_1_large_dim} shows that sufficiently high-dimensional uniformly distributed embedding spaces can represent very large object descriptions of size $K \le 100$ with perfect separability. Note that largest 4096 dimension embedding space equals the ResNet output embedding map dimension~\cite{he2016resnet}. The probabilistic bound~\eqref{eq:prob_of_compositionality} accurately predict the empirical separation probability result for uniform distributions. The bound fails for highly non-uniform distributions as expected.
Figure~\ref{fig:exp_1} visualizes embedding similarity distributions for different embedding spaces and object descriptions sizes $K$.
Figure~\ref{fig:exp_1_high-dim} shows how increasing dimensionality gradually improves separability.

We find that the object description size $K$ representable by $z^*$ is only constrained by embedding space dimensionality $D$ and degree of uniformity. The OpenCLIP embedding space provides better separability than the popular CLIP and SOTA multi-task optimized X-Decoder models. The pure language model SBERT has better embedding space than all VLM models. We propose to learn unconditional dense VLMs on language model embeddings instead of global description VLMs like CLIP as the pretrained vision encoder is not used . The findings motivate further work towards increasing uniformity of existing VLM embedding distributions to better leverage the capacity of high-dimensional embedding spaces and to improve discriminatability of compositional semantic embeddings~\cite{Li2011HypersphereCap, so2022geodesic}.

\begin{figure*}
\centering
\includegraphics[width=1.\textwidth] {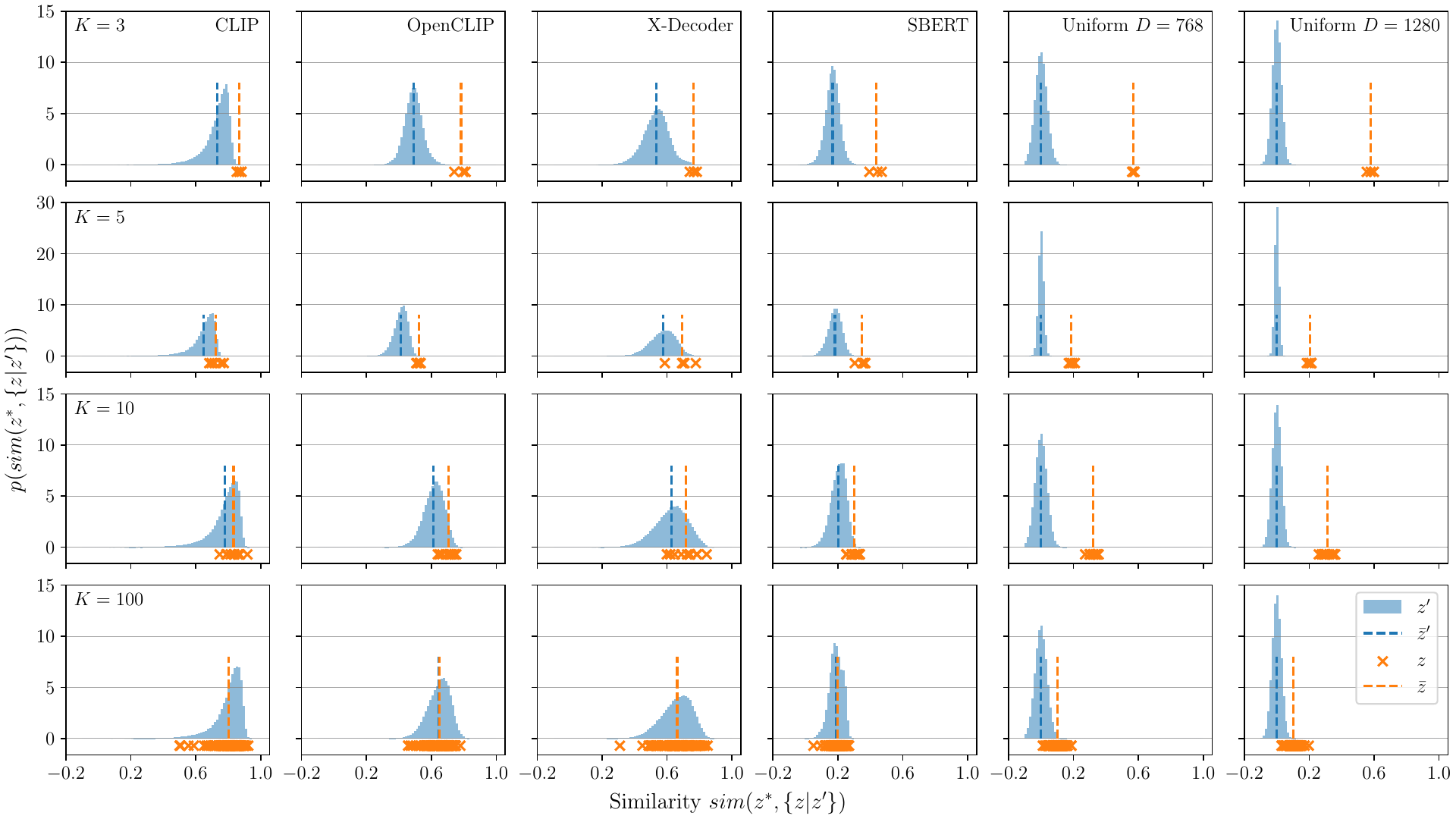}
\vspace{-8mm}
\caption{Similarity distributions between a latent compositional semantic embedding $z^*$ and all object description embeddings $z \in \mathcal{Z}$ it represent (orange) and randomly sampled unrelated word embeddings $z'$ (blue). Columns show different embedding spaces. Each row shows object descriptions of different size $K$. A $z^*$ is useful if it separates the distribution of $z$ and $z'$ by cosine similarity~\eqref{eq:cos_sim}.}
\label{fig:exp_1}
\vspace{-5mm}
\end{figure*}




\subsection{Experimental results 2: $z^*$ from object descriptions}
\label{sec:results_2_results_object_descriptions}
Here we provide separability results for $z^*$ representing sets $\mathcal{Z}$ of realistic object descriptions composed of related semantics $z$.
Table~\ref{tab:exp_1_separation} shows that realistic sets of related semantics have better separability than the lower bound of random semantic descriptions presented in Table~\ref{tab:exp_1_separation}. All VLMs achieve strong separability for $K \le 5$, and SBERT allows large object representations of $K \le 10$.


Figure~\ref{fig:exp_2} visualizes similarity distributions for three particular object descriptions of varying lengths $K$. 
The top row shows distributions for the short object description of a ``medium-sized utility vehicle'' $\mathcal{Z}_1 = $~\{truck, van, vehicle\}. All related $z \in \mathcal{Z}_1$ are perfectly separable from the distribution of non-related $z' \notin \mathcal{Z}_1$ by $z^*$ and $\theta_z$ given by \eqref{eq:theta_min}.
The middle row shows the separability of a medium sized description for a ``patch on a drivable flat asphalt road with painted lane markings'' $\mathcal{Z}_2 = $~\{road, lane marking, drivable, asphalt, flat\}. All models achieve above 99 \% separability.
The bottom row visualizes the distribution of a large description of a ``white wooden table surface'' $\mathcal{Z}_3 = $~\{ 
'table', 'wood', 'counter', 'solid', 'surface', 'white', 'static', 'flat', 'furniture', 'static' \}. The VLMs do not reach reliable separability. In contrast, the language model SBERT achieves 98 \% separability, demonstrating that SBERT embedding are practically useful up to about 10 semantics. We consider the results as upper bounds for visually learned representations by VLMs.

\begin{figure}
\centering
\includegraphics[width=0.48\textwidth] {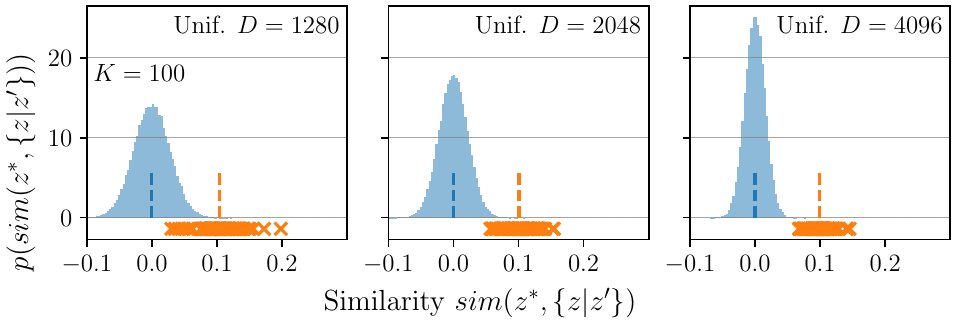}
\vspace{-4mm}
\caption{Similarity distributions for large object descriptions $\mathcal{Z}$ in very high-dimensional uniformly distributed embedding spaces.}
\label{fig:exp_1_high-dim}
\vspace{-5mm}
\end{figure}

\begin{figure}
\centering
\includegraphics[width=0.48\textwidth] {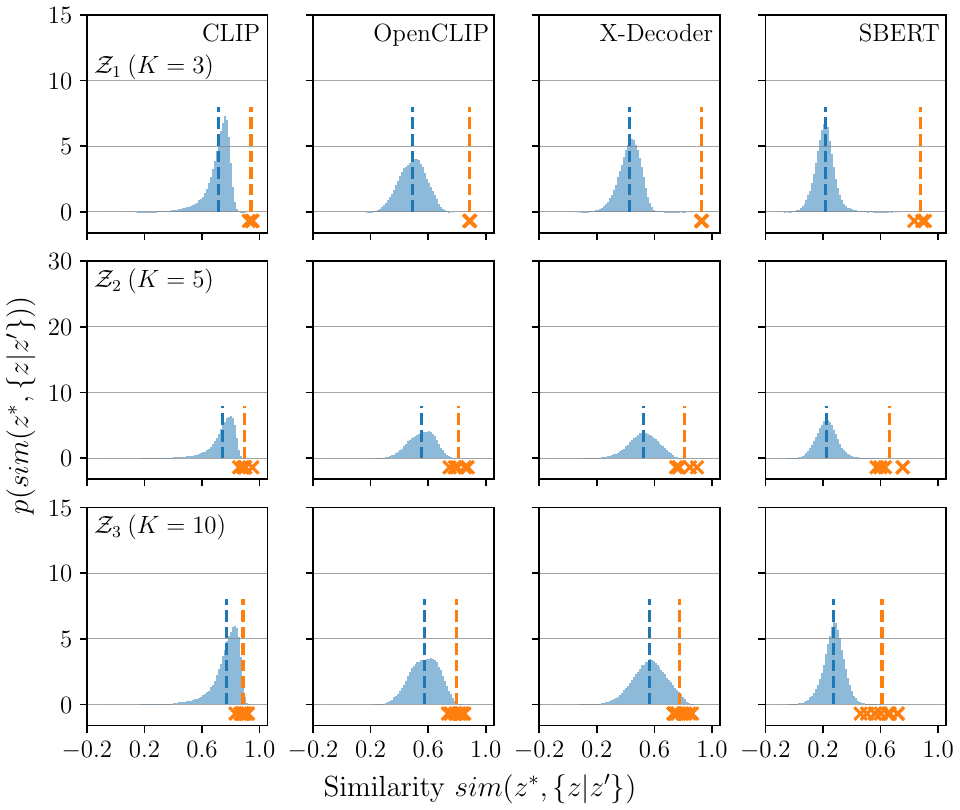}
\vspace{-4mm}
\caption{Similarity distributions for three realistic object descriptions $\mathcal{Z}_i$ of varying sizes $K$ (orange) and randomly sampled word embeddings $z'$ (blue).}
\label{fig:exp_2}
\vspace{-5mm}
\end{figure}




\subsection{Experimental results 3: $z^*$ from visual appearance}

Our results show common VLM models trained on conventional nonoverlapping annotations discover compositional semantics $z^*$ as specified by Definition~\ref{def:compositionality} and Theorem~\ref{theorem:prob_of_compositionality}. Discovering $z^*$ enables inferring overlapping semantics by our proposed \textsl{sufficient similarity} inference method (see Sec~\ref{sec:sufficient_similiarty}).

Table~\ref{tab:exp_3_conventional_models} presents segmentation performance for original non-overlapping annotations (e.g. COCO) and our novel compositional semantics (e.g. COCO CS) dataset variants with overlapping annotations. Each model is evaluated by the conventional most similar (MS) and our proposed sufficient similarity (SS) method. Levels (denoted $^{(2)}$) specify which hierarchical semantics are being evaluated (e.g. level 1 \textit{cat}, level 2 \textit{animal}, and level 3 \textit{outdoor}).
The region proposal method ZSeg~\cite{xu2021zsseg} underperforms other models explicitly trained on dense annotations despite the promise of highest generality.
The mask-based conditional method X-Decoder~\cite{zou2023x_decoder} modified to output unconditional dense embedding maps performs worse than the inherently unconditional pixel-level prediction model LSeg~\cite{li2022lseg}.
Our ViT-Adapter~\cite{chen2022vitadapter} based model implementation with a general-purpose SOTA architecture for dense vision tasks performs even better on both our novel overlapping and non-overlapping semantic inference tasks.
Our proposed sufficient similarity inference method improves inference performance of second level overlapping semantics across all models by 19.63 mIoU on average.
 Conventional most similar (MS) inference has a performance advantage over the sufficient similarity (SS) inference on level 1 semantics. The reason is that MS inference overfits level 1 semantics due to their prevalence in training data. Additionally, MS inference is fundamentally limited to predicting a single semantic, unlike SS inference which can theoretically achieve a perfect overlapping segmentation score.

\begin{table*}[]
\begin{center}
\caption{Unconditional open vocabulary segmentation and overlapping segmentation performance}
\vspace{-2mm}
\label{tab:exp_3_conventional_models}
\setlength{\tabcolsep}{0.5em}
\begin{tabular}{llllllllllllllll}
                                                         & \multicolumn{15}{c}{mIoU}                                                                                                                                                                                                                                                                                                                            \\ \hline
\multicolumn{1}{|l|}{\multirow{2}{*}{Model}}             & \multicolumn{1}{c|}{COCO}  & \multicolumn{4}{c|}{COCO CS}                                                      & \multicolumn{1}{c|}{ADE}   & \multicolumn{4}{c|}{ADE CS}                                                       & \multicolumn{1}{c|}{Mapillary} & \multicolumn{4}{c|}{Mapillary CS}                                                 \\
\multicolumn{1}{|l|}{}                                   & \multicolumn{1}{c|}{MS}    & \multicolumn{1}{c}{MS} & \multicolumn{1}{c|}{SS}    & \multicolumn{1}{c}{MS$^{(2)}$} & \multicolumn{1}{c|}{SS$^{(2)}$} & \multicolumn{1}{c|}{MS}    & \multicolumn{1}{c}{MS} & \multicolumn{1}{c|}{SS}    & \multicolumn{1}{c}{MS$^{(2)}$} & \multicolumn{1}{c|}{SS$^{(2)}$} & \multicolumn{1}{c|}{MS}        & \multicolumn{1}{c}{MS} & \multicolumn{1}{c|}{SS}    & \multicolumn{1}{c}{MS$^{(2)}$} & \multicolumn{1}{c|}{SS$^{(2)}$} \\ \hline
\multicolumn{1}{|l|}{ZSSeg~\cite{xu2021zsseg}}              & \multicolumn{1}{l|}{11.23}      &  10.87     & \multicolumn{1}{l|}{2.24}      &     3.21       & \multicolumn{1}{l|}{8.28}           & \multicolumn{1}{l|}{9.93}  & 9.14  & \multicolumn{1}{l|}{3.05}  &     6.35       & \multicolumn{1}{l|}{5.86}       & \multicolumn{1}{l|}{6.51}      & 5.39  & \multicolumn{1}{l|}{3.39}  &     0.67       & \multicolumn{1}{l|}{13.75}      \\ \hline
\multicolumn{1}{|l|}{X-Decoder~\cite{zou2023x_decoder}}    & \multicolumn{1}{l|}{28.57} & 25.33 & \multicolumn{1}{l|}{28.14} & 10.19      & \multicolumn{1}{l|}{22.52}      & \multicolumn{1}{l|}{6.48}  & 5.88  & \multicolumn{1}{l|}{12.77} & 4.85       & \multicolumn{1}{l|}{14.39}      & \multicolumn{1}{l|}{11.52}     & 9.07  & \multicolumn{1}{l|}{9.39}  & 2.03       & \multicolumn{1}{l|}{20.26}      \\ \hline
\multicolumn{1}{|l|}{LSeg~\cite{li2022lseg}}                & \multicolumn{1}{l|}{38.42} & 37.10 & \multicolumn{1}{l|}{20.41} & 14.59      & \multicolumn{1}{l|}{14.66}      & \multicolumn{1}{l|}{27.40} & 25.11 & \multicolumn{1}{l|}{11.37} & 6.35       & \multicolumn{1}{l|}{16.23}      & \multicolumn{1}{l|}{30.08}     & 24.51 & \multicolumn{1}{l|}{11.81} & 0.01       & \multicolumn{1}{l|}{22.97}      \\ \hline
\multicolumn{1}{|l|}{ViT-Adapter~\cite{chen2022vitadapter}} & \multicolumn{1}{l|}{\textbf{48.12}} & \textbf{46.97} & \multicolumn{1}{l|}{39.16} & 12.55      & \multicolumn{1}{l|}{\textbf{54.19}}      & \multicolumn{1}{l|}{\textbf{47.47}} & \textbf{43.21} & \multicolumn{1}{l|}{30.29} & 28.99      & \multicolumn{1}{l|}{\textbf{31.63}}      & \multicolumn{1}{l|}{\textbf{46.92}}     & \textbf{37.94} & \multicolumn{1}{l|}{24.69} & 0.00          & \multicolumn{1}{l|}{\textbf{40.15}}      \\ \hline
\multicolumn{16}{l}{MS: Most similar evaluation, SS: Sufficient similarity evaluation, $^{(2)}$: Level 2 semantics evaluation only, CS: Compositional semantics}                                                                                                                                                                                                                                               
\end{tabular}
\end{center}
\vspace{-5mm}
\end{table*}

\begin{table}[]
\begin{center}
\caption{Learning compositional semantics by overlapping annotations}
\vspace{-2mm}
\label{tab:exp_3_coco_compositional}
\setlength{\tabcolsep}{0.5em}
\begin{tabular}{llllllll}
\hline
\multicolumn{1}{|c|}{\multirow{2}{*}{Model}} & \multicolumn{1}{l|}{\multirow{2}{*}{p($\mathcal{D}$)}} & \multicolumn{6}{c|}{COCO CS {[}mIoU{]}}                                                                                                               \\
\multicolumn{1}{|c|}{}                       & \multicolumn{1}{l|}{}                      & \multicolumn{1}{c}{MS} & \multicolumn{1}{c|}{SS}    & MS$^{(2)}$ & \multicolumn{1}{l|}{SS$^{(2)}$} & \multicolumn{1}{c}{MS$^{(2,3)}$} & \multicolumn{1}{c|}{SS$^{(2,3)}$} \\ \hline
\multicolumn{1}{|l|}{\multirow{2}{*}{CLIP}}  & \multicolumn{1}{l|}{US}                    & 25.90                  & \multicolumn{1}{l|}{32.99} & 34.19 & \multicolumn{1}{l|}{57.29} & \textbf{33.27}                       & \multicolumn{1}{l|}{55.93}   \\ \cline{2-8} 
\multicolumn{1}{|l|}{}                       & \multicolumn{1}{l|}{WS}                    & \textbf{45.94}                  & \multicolumn{1}{l|}{37.89} & 12.71 & \multicolumn{1}{l|}{50.18} & 12.18                       & \multicolumn{1}{l|}{48.82}   \\ \hline
\multicolumn{1}{|l|}{\multirow{2}{*}{SBERT}} & \multicolumn{1}{l|}{US}                    & 24.95                  & \multicolumn{1}{l|}{38.19} & 33.92 & \multicolumn{1}{l|}{\textbf{58.55}} & 32.39                       & \multicolumn{1}{l|}{\textbf{57.38}}   \\ \cline{2-8} 
\multicolumn{1}{|l|}{}                       & \multicolumn{1}{l|}{WS}                    & 45.67                  & \multicolumn{1}{l|}{\textbf{42.23}} & 12.77 & \multicolumn{1}{l|}{56.55} & 12.26                       & \multicolumn{1}{l|}{55.57}   \\ \hline
\multicolumn{8}{l}{\begin{tabular}[c]{@{}l@{}}US: Uniform sampling, WS: Weighted sampling, $^{(2,3)}$ Level 2 and 3 se-\\mantics, SS: Sufficient similarity evaluation, MS: Most similar evaluation\end{tabular}}
\end{tabular}
\end{center}
\vspace{-8mm}
\end{table}


\begin{figure}[ht!]
\centering
\includegraphics[width=0.48\textwidth] {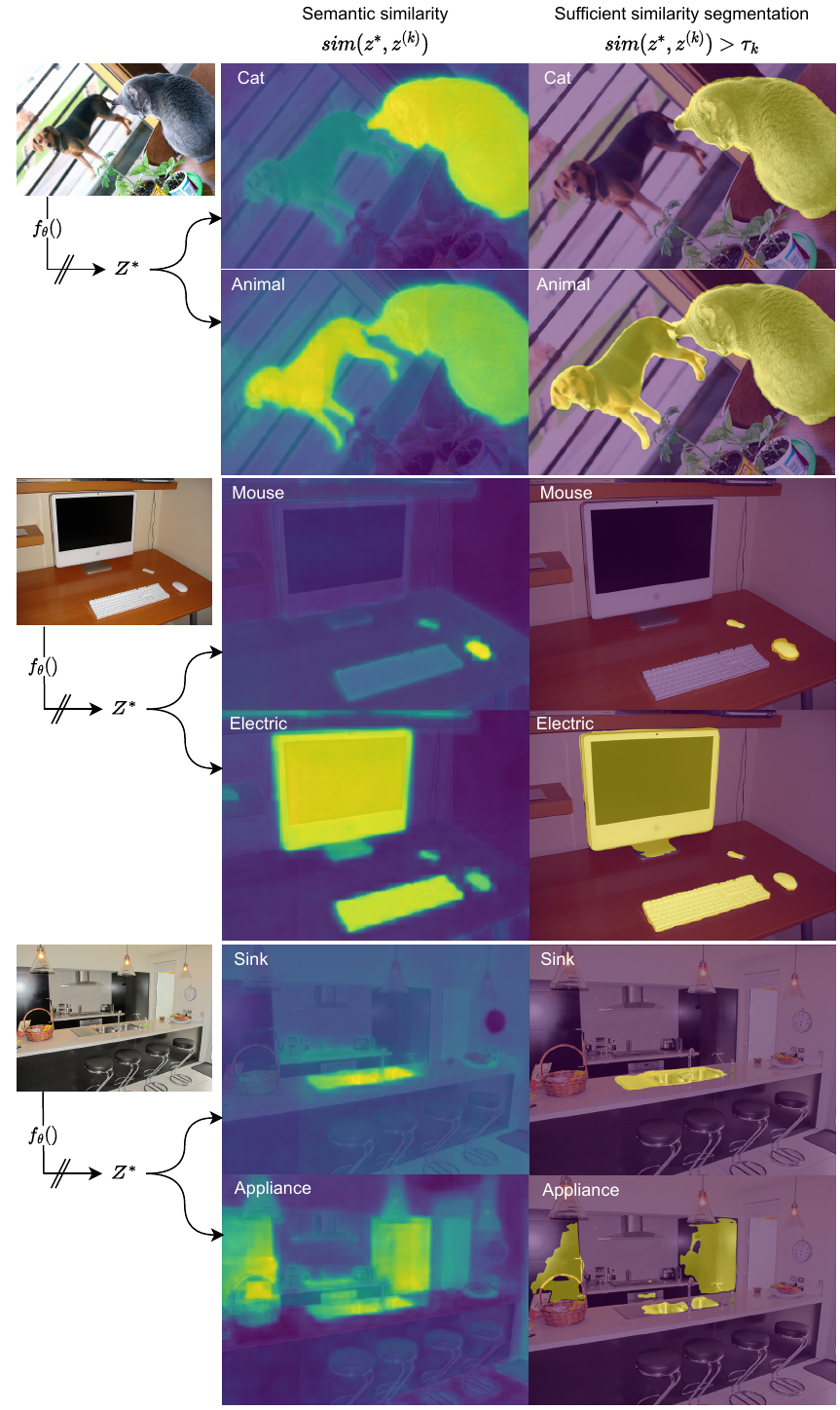}
\vspace{-3mm}
\caption{Examples of overlapping semantics inferrable from latent compositional semantic embeddings $z^*$ representing learned object descriptions $\mathcal{Z}$. The 3rd and 4th examples illustrate failure cases related to sufficient similarity threshold $\tau_k$ estimation for low- and high-level semantics, respectively.}
\label{fig:exp_3_viz}
\end{figure}

\begin{figure}[t!]
\centering
\includegraphics[width=0.48\textwidth] {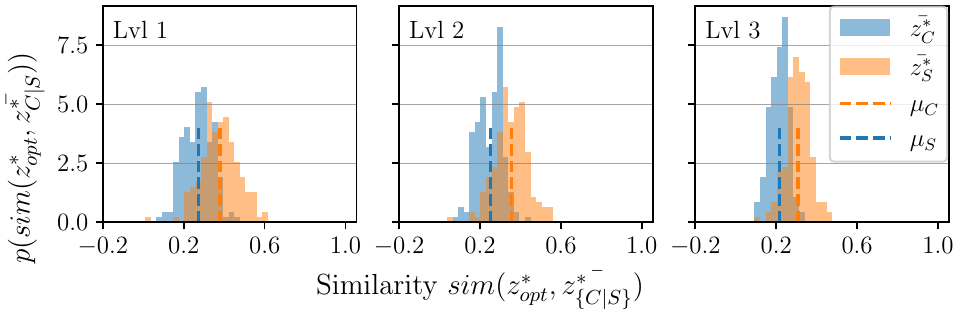}
\vspace{-4mm}
\caption{The distribution of mean similarities between optimal $z^*_{opt}$ and learned $z^*$ CLIP (blue) and SBERT (orange) embeddings for three semantic levels.}
\label{fig:exp_3_comp_sem_sim}
\vspace{-5mm}
\end{figure}


Table~\ref{tab:exp_3_coco_compositional} shows evaluation results for ViT-Adapter~\cite{chen2022vitadapter} models trained directly on overlapping COCO compositional semantics using different embedding spaces and annotation sampling strategies as described in Sec.~\ref{sec:exp_3_visual_appearance}.
Learning from a weighted sampling (WS) annotation distribution results in a uniform exposure of semantics from all levels and the best overall performance despite rarity of higher-level semantics.
The performance gap between the best ViT-Adapter model in Table~\ref{tab:exp_3_conventional_models} and Table~\ref{tab:exp_3_conventional_models} on overlapping conventional semantics is only 2.63 mIoU. The small gap indicate that learning $z^*$ from existing single non-overlapping annotations is an effective approach.
See Figure~\ref{fig:font_fig}~and~\ref{fig:exp_3_viz} for overlapping semantic inference visualizations.

%

%

In Figure~\ref{fig:exp_3_comp_sem_sim} we visualize the mean similarity distribution between learned $z^*$ and optimal $z^*_{opt}$ embeddings by Theorem~\ref{theorem:discoverability}.
Learned $z^*$ are far from optimal $z^*_{opt}$ for both CLIP and SBERT embedding models, similarly to how learned VL embeddings have a similarity or alignment gap with the encoded text annotations~\cite{wang2020hypersphere, wang2021understanding_cl}. However, the results in Table~\ref{tab:exp_3_conventional_models}-\ref{tab:exp_3_coco_compositional} proves that learned $z^*$ have adequate similarity with $z^*_{opt}$ for sufficient similarity segmentation of small semantic sets $\mathcal{Z}$. Increasing alignment between learned $z^*$ and $z^*_{opt}$ will enable $z^*$ to represent larger $\mathcal{Z}$ and approach the theoretical capacity of the text embedding space investigated in Sec~\ref{sec:results_1_random_semantics}-\ref{sec:results_2_results_object_descriptions}).






\section{Conclusions}
\label{sec:conclusions}
In this paper, we present a mathematical analysis and experimental verification of latent compositional semantic embeddings $z^*$ as a learnable knowledge representation for rich object descriptions satisfying the requirements of spatio-semantic representations.
Our VLM method is limited by the similarity gap between predicted $z^*$ and optimal $z^*_{opt}$.
For future work, we propose a dynamic approach to infer sufficient similarity thresholds $\tau_k$ which takes into account environment and task context. Additionally, we propose a learning objective optimizing for absolute similarity or greater alignment in addition to relative similarity~\eqref{eq:cl_objective}.
We hope our findings will contribute towards greater understanding of what semantics unconditional dense VLMs learn, and the theoretical limit of semantic expressivity of learned representations. 
Additionally, we hope to encourage more work towards learning and inferring compositional and overlapping semantics, and provide motivation for improving uniformity of VL embedding spaces.


\section*{Acknowledgments}
This work was financially supported by JST SPRING, Grant Number JPMJSP2125. The authors would like to take this opportunity to thank the ``Interdisciplinary Frontier Next-Generation Researcher Program of the Tokai Higher Education and Research System''.

The authors would like to thank Minh-Quan Dao, Yingjie Niu, Keisuke Fujii, and Kento Ohtani for proofreading and constructive criticism of the manuscript.


{\appendix[Mathematical proofs]
Here we provide full mathematical proofs for all theoreoms, propositions, and lemmas. 


\subsection{Proof for Lemma 1}
\label{app:lemma_1}

\begin{proof}
All normalized semantic embeddings $z$ are vectors in the set of vectors constituting the unit hypersphere
\begin{equation}
\label{eq:set_of_zs}
    z \in S^{D-1} = \{ z \in \mathbb{R}^{D} : \|z\| = 1 \}.
\end{equation}

The distribution of uniformly sampled random vectors $Z \sim \mathit{U}(S^{D-1})$ is isotropic (i.e. properties rotationally invariant). The covariance matrix $\Sigma$ of isotropic distributions equals the diagonal matrix $I_D$:
\begin{equation}
\label{eq:isotropy}
    \Sigma(Z) = \E Z Z^T = I_D.
\end{equation}

For the expected inner product of two independent random vectors $Z^{(i)}$, $Z^{(j)}$ sampled from an isotropic distribution it follows
\begin{equation}
    \E \inner{Z^{(i)}}{Z^{(j)}}^2 = \E_{Z^{(j)}} \E_{Z^{(i)}} \left[ \inner{Z^{(i)}}{Z^{(j)}}^2 | Z^{(j)}. \right]
\end{equation}

Assuming a particular but arbitrary vector $z^{(j)}$ and substituting \eqref{eq:isotropy} the inner expectation becomes
\begin{equation}
\label{eq:inner_expecation}
\begin{split}
    \E_{Z^{(i)}} \inner{Z^{(i)}}{z^{(j)}}^2 &= z^{(j)T} \E \left[ Z^{(i)} Z^{(i)T} \right] z^{(j)} \\
    &= z^{(j)T} I_D z^{(j)} \\
    &= z^{(j)T} z^{(j)} \\
    &= \|z^{(j)} \|^2
\end{split}
\end{equation}
The outer expectation after substituting \eqref{eq:inner_expecation} and \eqref{eq:isotropy} becomes
\begin{equation}
\begin{split}
    \E_{Z^{(i)}} \inner{Z^{(i)}}{z^{(j)}}^2 &= \E_{Z^{(j)}} \| z^{(j)} \|^2 = \E Z^{(j)T} Z^{(j)} \\
    &= \E tr \left[ Z^{(j)T} Z^{(j)} \right] \\
    &= \E tr \left[ Z^{(j)} Z^{(j)T} \right] \\
    &= tr \left[ \E Z^{(j)} Z^{(j)T} \right] \\
    &= tr \left[ I_D \right] = D.
\end{split}
\end{equation}
Expanding the inner product of two normalized random Euclidean vectors $\hat{Z}^{(i)}$, $\hat{Z}^{(j)}$ sampled from an isotropic distribution
\begin{equation}
\begin{split}
    \E \inner{\hat{Z}^{(i)}}{\hat{Z}^{(j)}} &= \E \hat{Z}^{(i)} \cdot \hat{Z}^{(j)} \\
    &= \E \tfrac{Z^{(i)}}{\|Z^{(i)}\|} \cdot \tfrac{Z^{(j)}}{\|Z^{(j}\|} \\
    &= \E \tfrac{1}{\|Z^{(i)}\| \|Z^{(j)}\|} \inner{Z^{(i)}}{Z^{(j)}} \\
    &= \tfrac{\sqrt{D}}{\sqrt{D} \sqrt{D}} = \tfrac{1}{\sqrt{D}}.
\end{split}
\end{equation}
Taking the limit shows that any two random vectors are orthogonal in high-dimensional isotropic vector spaces
\begin{equation}
    \lim_{D \to \infty} \E \inner{\hat{Z}^{(i)}}{\hat{Z}^{(j)}} = 0.
\end{equation}
As orthogonality is invariant to vector length
\begin{equation}
    \E \inner{\hat{Z}^{(i)}}{\hat{Z}^{(j)}} = \E \inner{Z^{(i)}}{Z^{(j)}} = \tfrac{1}{\sqrt{D}}.
\end{equation}
Noting that inner product $\inner{Z^{(i)}}{Z^{(j)}}$ equals cosine distance similarity $sim(Z^{(i)}, Z^{(j)})$ for Euclidean spaces completes the proof.
\end{proof}


\subsection{Proof for Lemma~\ref{lemma:hyperspherical_cap}}
\label{app:lemma_2}

\begin{proof}
Supposing the optimal compositional semantic embedding $z^*$ is found given a set of $K$ sub-semantic embeddings $\mathcal{Z} = \{ z^{(1)}, \ldots, z^{(K)} \}$ such that
\begin{equation}
    z^* = \argmax \tfrac{1}{K} \sum_{i=1}^K sim(z^*, z^{(k)}) - \E sim(z^*, z')
\end{equation}
where $z'$ is a semantic embedding of any unrelated object description.

Note that the sub-semantics $\mathcal{Z}$ can be ordered by similarity with $z^*$, and that the least similar sub-semantic $z_{min}$ and its similarity value $\epsilon$ is known
\begin{equation}
    z_{min} = \argmin sim(z^*, z) \forall z \in \mathcal{Z}.
\end{equation}
\begin{equation}
    \epsilon = sim(z^*, z_{min}).
\end{equation}

A hyperspherical cap $S^{D-1}_{cap}$ is defined by $z^*$ as the normal center vector and the angle $\theta_{min}$ between $z^*$ and $z_{min}$
\begin{equation}
    S^{D-1}_{cap} = \{ z \in \mathbb{R}^D : \|z\| = 1, \theta_z \le \theta_{min} \}
\end{equation}
\noindent
where the angles $\theta$ are related to similarities by
\begin{equation}
    \theta_z = \arccos(sim(z^*, z))
\end{equation}
\begin{equation}
\label{eq:theta_min}
    \theta_{min} = \arccos(sim(z^*, z_{min})).
\end{equation}

Since
\begin{equation}
\label{eq:all_z_more_sim_than_z_min}
    sim(z^*, z) \ge sim(z^*, z_{min}) \Leftrightarrow \theta_z \le \theta_{min} \;\; \forall z \in \mathcal{Z}
\end{equation}
\begin{equation}
    sim(z^*, z^*) = 1 \Leftrightarrow \theta_{z^*} = 0 < \theta_{min}
\end{equation}
\noindent
all $z \in \mathcal{Z}$ and $z^*$ are in $S^{D-1}_{cap}$.

\end{proof}


\subsection{Proof for Theorem~\ref{theorem:discoverability}}
\label{app:theorem_1}

\begin{proof}
The optimal compositional semantic embedding $z^* \in \mathbb{R}^D$ representing a set of $K$ sub-semantics $z \in \mathcal{Z}$ in a uniform distribution over the unit hypersphere $\mathit{U}(S^{D-1})$ is
\begin{equation}
\label{eq:maximize_sim}
    z^* = \argmax \sum_{i=1}^K sim(z^*, z^{(i)}) = \argmax \sum_{i=1}^K (z^*)^T z^{(i)}.
\end{equation}

Maximizing cosine distance similarity $sim(z^*, z)$ is equivalent to minimizing squared distance $||z^* - z||^2$ on the unit hypersphere as
\begin{equation}
\begin{split}
    \min \sum_{i=1}^K || z^* - z^{(i)}&||^2 = \sum_{i=1}^K (z^* - z^{(i)})^T (z^* - z^{(i)}) \\
    &= \min \sum_{i=1}^K \left[ ||z^*|| - 2 (z^*)^T z^{(i)} + ||z^{(i)}|| \right] \\
    &= \min \sum_{i=1}^K \left[ 2 - 2 (z^*)^T z^{(i)}  \right] \\
    &= \min \left[ 2K - 2\sum_{i=1}^K (z^*)^T z^{(i)} \right] \\
    &\propto \min \left[- \sum_{i=1}^K (z^*)^T z^{(i)} \right] \\
    &= \max \sum_{i=1}^K (z^*)^T z^{(i)}
\end{split}
\end{equation}

The vector $z^*$ maximizing \eqref{eq:maximize_sim} can thus be found from the derivative with respect to the vector $z^*$
\begin{equation}
\label{eq:square_dist_derivative}
    \frac{d}{d z^*} \sum_{i=1}^K || z^* - z^{(i)}||^2 = 0.
\end{equation}

To apply the general chain rule~\cite{deisenroth2020MML}, we rewrite \eqref{eq:square_dist_derivative} with variable substitution so that each operation in the function is factored into single variable components for easily finding partial differentials:
%
%
\begin{equation}
\label{ex:var_subtstitution}
\begin{split}
    \sum_{i=1}^K || z^* - z^{(i)}||^2 = \: &g =  \sum_{i=1}^K ||f||^2 \\
    &f = z^* - z^{(i)}.
\end{split}
\end{equation}
Applying the chain rule and noting that $||f||^2 = f^T f$ gives
%
\begin{equation}
\label{eq:square_dist_chain_rule}
\begin{split}
    \frac{\partial g}{\partial z^*} &= \frac{\partial g}{\partial f} \frac{\partial f}{ \partial z^*} \\
    &= \sum_{i=1}^K 2 f^T \: \frac{\partial}{\partial z^*} \left( z^* - z^{(i)} \right) \\
    &= 2 \sum_{i=1}^K (z^* - z^{(i)})^T \left[ \tfrac{\partial}{\partial z_1^*} (z^* - z^{(i)}), \ldots, \tfrac{\partial}{\partial z_D^*} (z^* - z^{(i)})^T \right] \\
    &= 2 \sum_{i=1}^K (z^* - z^{(i)})^T \left[ e_1, \ldots, e_D \right] \\
    &= 2 \left[ \sum_{i=1}^K (z_1^* - z_1^{(i)}), \ldots, \sum_{i=1}^K (z_D^* - z_D^{(i)}) \right]^T = 0
\end{split}
\end{equation}

\noindent where $e_d$ is the one-hot vector with the $d^{\text{th}}$ element set to 1. Equation \eqref{eq:square_dist_chain_rule} is an element-wise system of equations stating that for every $d^{\text{th}}$ element
\begin{equation}
    \sum_{i=1}^K (z_d^* - z_d^{(i)}) = 0
\end{equation}
\noindent meaning the optimal $z^*$ maximizing \eqref{eq:maximize_sim} equals the centroid of the sub-semantics $z^{(i)} \in \mathcal{Z}$
\begin{equation}
\label{eq:centroid}
    z^* = \frac{1}{K} \sum_{i=1}^K z^{(i)}.
\end{equation}

To prove $z^*$ specified by \eqref{eq:centroid} satisfies Definition~\ref{def:compositionality} we write
\begin{equation}
\label{eq:z_star_sim_w_centroid}
    \E sim(z^*, z) = \E \left[ \left(\tfrac{1}{K} \sum_{i=1}^K z^{(i)} \right) \cdot z \right] = \tfrac{1}{K} \sum_{i=1}^K \E z^{(i)} \cdot z.
\end{equation}

As $z$ equals one of the $z^{(i)} \in \mathcal{Z}$ we can assume $z = z^{(k)}$ without loss of generality and expand the sum in \eqref{eq:z_star_sim_w_centroid} as 
\begin{multline}
\label{eq:z_star_sim_expanded}
    \E sim(z^*, z) = \tfrac{1}{K} \left( \E [z^{(1)} \cdot z^{(k)}] + \ldots \right.\\ 
    \left.+ \E [z^{(k)} \cdot z^{(k)}] + \ldots + \E [z^{(K)} \cdot z^{(k)}] \right)
\end{multline}

We find a lower bound for \eqref{eq:z_star_sim_expanded} by applying Lemma~\ref{lemma:expected_sim} and noting that the expected similarities $sim(z^{(i)}, z^{(j)}) \forall z^{(i)}, z^{(j)} \in \mathcal{Z}$ must be higher or equal to random vectors, and that $z^{(k)} \cdot z^{(k)} = 1$
\begin{equation}
\label{eq:z_star_sim_bound}
\begin{split}
    \E sim(z^*, z) &\ge \tfrac{1}{K} \left( D^{-\frac{1}{2}} + \ldots + 1 + \ldots + D^{-\frac{1}{2}} \right) \\
    &= \tfrac{1}{K} \left( (K-1) D^{-\frac{1}{2}} + 1 \right).
\end{split}
\end{equation}

Substituting the bound \eqref{eq:z_star_sim_bound} into Definition~\ref{def:compositionality} and applying Lemma~\ref{lemma:expected_sim} on the RHS 
\begin{equation}
\label{eq:z_star_inequality}
    \E sim(z^*, z) \ge \tfrac{1}{K} \left( (K-1) D^{-\frac{1}{2}} + 1 \right) > D^{-\frac{1}{2}}.
\end{equation}

Rearranging the two leftmost inequalities in \eqref{eq:z_star_inequality}
\begin{gather}
    (K-1) D^{-\frac{1}{2}} + 1 > K D^{-\frac{1}{2}} \\
    K D^{-\frac{1}{2}} - D^{-\frac{1}{2}} + 1 - K D^{-\frac{1}{2}} > 0 \\
    - D^{-\frac{1}{2}} > - 1 \\
    D^{-\frac{1}{2}} < 1 \\
    \sqrt{D} > 1
\end{gather}
which is true for $D > 1 $ and thus proves Theorem~\ref{theorem:discoverability}.

\end{proof}


\subsection{Proof for Theorem~\ref{theorem:prob_of_compositionality}}
\label{app:theorem_2}

\begin{proof}

A random vector $z'$ sampled from the uniform distribution over the unit hypersphere $\mathit{U}(S^{D-1})$ is equally likely to be a point anywhere on $S^{D-1}$. The probability $z'$ is sampled in a particular surface region $A_{D,r}$ is
\begin{equation}
\label{eq:prob_in_hyper_cap}
    P(z' \in A_{D,r}) = \tfrac{A_{D,r}}{A_D}
\end{equation}
\noindent where $A_D$ is the total surface region.

The probability $z'$ is sampled into the surface region defined by the hyperspherical cap $S^{D-1}_{cap}$ with surface area $A_{cap}$ given in Lemma~\ref{lemma:hyperspherical_cap} is therefore
\begin{equation}
    P(z' \in S^{D-1}_{cap}) = \tfrac{A_{cap} }{A_D}.
\end{equation}

The surface area ratio of a hyperspherical cap~\cite{Li2011HypersphereCap} is
\begin{equation}
\label{eq:hyper_surface_area_ratio}
    A_{D,r} = \tfrac{1}{2} A_D I_{\sin^2(\theta)}(\tfrac{D-1}{2}, \tfrac{1}{2})
\end{equation}
\noindent where $I_x(a, b)$ is the regularized incomplete beta function.

Substituting \eqref{eq:hyper_surface_area_ratio} into \eqref{eq:prob_in_hyper_cap} gives
\begin{equation}
    P(z' \in S^{D-1}_{cap}) = \tfrac{1}{2} I_{\sin^2(\theta)}(\tfrac{D-1}{2}, \tfrac{1}{2}).
\end{equation}

The probability that $z'$ is not sampled in $S^{D-1}_{cap}$ is
\begin{equation}
\label{eq:prob_z'_not_in_cap}
\begin{split}
    P(z' \notin S^{D-1}_{cap}) &= 1 - P(z' \in S^{D-1}_{cap}) \\
    &= 1 - \tfrac{1}{2} I_{\sin^2(\theta)}(\tfrac{D-1}{2}, \tfrac{1}{2}).
\end{split}
\end{equation}

By Lemma~\ref{lemma:hyperspherical_cap} and \eqref{eq:theta_min} we know 
\begin{equation}
    \forall z' \;\;  sim(z^*, z_{min}) \ge sim(z^*, z') \Leftrightarrow z' \notin S^{D-1}_{cap}.
\end{equation}

Substituting the bound $sim(z^*,z_{min})$ by \eqref{eq:all_z_more_sim_than_z_min} gives
\begin{equation}
\label{eq:subst_bound}
    \forall z', z \in \mathcal{Z} \;\;  sim(z^*, z) \ge sim(z^*, z') \Leftrightarrow z' \notin S^{D-1}_{cap}.
\end{equation}

Substituting the LHS of \eqref{eq:subst_bound} into \eqref{eq:prob_z'_not_in_cap} and recollecting \eqref{eq:theta_min} proves Theorem~\ref{theorem:prob_of_compositionality}.

\end{proof}


\subsection{Proof for Proposition~\ref{proposition:discoverability_2}}
\label{app:proposition_1}

\begin{proof}
Non-uniformity means the distribution of vectors is not maximally dispersed over the hypersphere~\cite{wang2020hypersphere}. Recalling Lemma~\ref{lemma:expected_sim} for uniform distributions, the expected similarity of two non-uniformly distributed independent vectors $Z^{(i)}, Z^{(j)} \sim p(Z)$ is therefore
\begin{equation}
\label{eq:non-uniform_random_sim}
    \E sim(Z^{(i)}, Z^{(j)}) = C \ge \tfrac{1}{\sqrt{D}}.
\end{equation}

By substituting \eqref{eq:non-uniform_random_sim} in Definition~\ref{def:compositionality} gives
\begin{equation}
\label{eq:non-uniform_comp_sim}
    \E sim(z^*, z) > \E sim(z^*, z') = C.
\end{equation}

Expanding the LHS of \eqref{eq:non-uniform_comp_sim} using the same idea as in \eqref{eq:z_star_sim_w_centroid} and \eqref{eq:z_star_sim_expanded}
\begin{equation}
\label{eq:non-uniform_comp_sim_bound}
    \E sim(z^*, z) \ge \tfrac{1}{K}\left[ (K-1) C + 1 \right].
\end{equation}

Substituting \eqref{eq:non-uniform_comp_sim_bound} into \eqref{eq:non-uniform_comp_sim}
\begin{gather}
    \tfrac{1}{K}\left[ (K-1) C + 1 \right] > C \\
    KC - C +1 > KC \\
    -C > -1 \\
    C < 1.
\end{gather}

Since $sim(z^{(i)}, z^{(j)}) \in [-1, 1[ \;\; s.t. z^{(i)} \neq z^{(j)}$ the inequality \eqref{eq:non-uniform_comp_sim} is true  for all distributions $p(z)$ except the singular distribution and thus proves Proposition~\ref{proposition:discoverability_2}.

\end{proof}


\subsection{Proof for Proposition~\ref{proposition:gradient_descent}}
\label{app:proposition_2}

\begin{proof}

The global convergence guarantee for convex optimization problems~\cite{boyd2004convex_optimization} proves that for any convex function $f$ is guaranteed that the value $z^{*(t)}$ converges to the optimal value $z^*$
\begin{equation}
    \lim_{t \rightarrow \infty} f(z^{*(t)}) = f(z^*)
\end{equation}
given a sufficiently small learning rate $\lambda$.

We prove that the cosine similarity optimization objective~\eqref{eq:maximize_sim} is a convex problem by noting that the set~\eqref{eq:set_of_zs} is convex and show that the Hessian matrix
\begin{equation}
    H\left( f(z^*) \right) = \nabla^2_{z^*} f(z^*) = \begin{bmatrix} \frac{\partial}{\partial z^*_i \partial z^*_j} f(z^*) \end{bmatrix}
\end{equation}
is a positive semidefinite matrix~\cite{deisenroth2020MML}. Note that $f(z^*)$ substitutes $\sum_{k=1}^K sim(z^*, z^{(k)})$. Recalling the form of the first partial derivatives~\eqref{eq:square_dist_chain_rule} and taking another partial derivative for an arbitrary element
\begin{equation}
    \tfrac{\partial}{\partial z^*_i \partial z^*_j} f(z^*) = \tfrac{\partial}{\partial z^*_i} \left[ K z^*_j - \sum_{k=1}^K z^{(k)}_j\right] = K \mathbf{1}_{i=j}.
\end{equation}
The Hessian matrix is thus the scaled identity matrix
\begin{equation}
    H\left( f(z^*) \right) = \begin{bmatrix} K e_{i} \end{bmatrix} = K I_D
\end{equation}
meaning $f(z^*)$ is a convex function.

Finally noting that
\begin{equation}
    \E \left[ \frac{1}{L}\sum_{i=1}^L z \in \mathcal{Z}^{(t)} \right] = \frac{1}{K} \sum_{i=1}^K z^{(i)} \in \mathcal{Z}, \; \mathcal{Z}^{(t)} \subseteq \mathcal{Z}
\end{equation}
shows that the optimal convergence value obtained by optimizing~\eqref{eq:maximize_sim} by gradient descent results in the optimal compositional semantic embedding $z^*$ obtained by \eqref{eq:centroid}.

\end{proof}

}


 
%

\bibliographystyle{IEEEtran}
\bibliography{refs}

\vspace{-10mm}
\begin{IEEEbiographynophoto}{Robin Karlsson}
(Student Member, IEEE) received a BSc. degree from the School of Engineering, Aalto University, Finland, and an MSc. degree from the Graduate School of Frontier Science, University of Tokyo, Japan. From 2018 to 2021 he worked as an autonomous vehicle research scientist at Ascent Robotics and TIER IV. He is currently pursuing a Ph.D. degree at the Graduate School of Informatics, Nagoya University, Japan. His research interest includes neurosymbolic AI, world representations for general-purpose mobile robotics, machine reasoning, and AGI. His contributions include two international conference best paper awards, a national student competition 1st place, and the IEEE ITSS Young Researcher Award.
\end{IEEEbiographynophoto}
\vspace{-10mm}
\begin{IEEEbiographynophoto}{Francisco Lepe-Salazar}
pursued his graduate studies at Waseda University, Japan. He is currently the Managing Director of Ludolab, the Director of the Observatorio Nacional de la Industria de los Videojuegos with DevsVJ MX, a Creator and an Organizer with International Contest Games4Empowerment, a Founding Member of the Educational Program Código Frida, and a Teacher with the University of Colima, Mexico, and the Deggendorf Institute of Technology, Germany. His research interests include cognitive science, human–computer interaction, and user empowerment. His contributions include various publications at top international venues.
\end{IEEEbiographynophoto}
\vspace{-10mm}
\begin{IEEEbiographynophoto}{Kazuya Takeda}
(Governors member, IEEE ITS Society; Governors member, APSIPA; Fellow, IEICE) serves as a Vice President of Nagoya University and Professor at Nagoya University's Institute of Innovation for Future Society and Graduate School of Informatics. He is also a Director at Tier IV, Inc. Dr. Takeda earned his Bachelor's, Master's, and Ph.D. from Nagoya University in 1983, 1985, and 1993, respectively. He has held positions at ATR (Advanced Telecommunication Research Laboratories) and KDD R\&D Lab, in addition to being a visiting scientist at MIT.
His research primarily focuses on signal processing and machine learning of behavior signals and their applications. With over 150 journal papers, 9 co-authored/co-edited books, and 15 patents to his name, Dr. Takeda is a prolific contributor to his field. His achievements include the 2020 IEEE ITS Society Outstanding Research Award and six best paper awards from IEEE international conferences and workshops, in addition to various domestic awards.
\end{IEEEbiographynophoto}

\end{document}